\newtheorem{theorem}{Theorem}
\newtheorem{assumption}{Assumption}
\newtheorem{corollary}{Corollary}
\newtheorem{definition}{Definition}
\newtheorem{lemma}{Lemma}[section]
\newtheorem{proposition}{Proposition}[section]
\newtheorem{remark}{Remark}[section]
\newtheorem{example}{Example}[section]
\def\begeqn{\begin{equation}}
\def\endeqn{\end{equation}}
\def\begth{\begin{theorem}}
\def\endth{\end{theorem}}
\def\begprop{\begin{proposition}}
\def\endprop{\end{proposition}}
\def\begcor{\begin{corollary}}
\def\endcor{\end{corollary}}
\def\begdef{\begin{definition}}
\def\enddef{\end{definition}}
\def\beglemm{\begin{lemma}}
\def\endlemm{\end{lemma}}
\def\begexm{\begin{example}}
\def\endexm{\end{example}}
\def\begrem{\begin{remark}}
\def\endrem{\end{remark}}
\def\begassum{\begin{assumption}}
\def\endassum{\end{assumption}}
\def\N{\mathbb{N}}
\def\R{\mathbb{R}}
\def\E{\mathcal{E}}
\def\H{\mathcal{H}}
\def\R{\mathbb{R}}
\def\bE{{\mathbb{E}}}
\def\RR{{\mathbb R}}
\title{Fast and Strong Convergence of Online Learning Algorithms$^\dag$\footnotetext{\dag~The work described in this paper is supported partially by the National Natural Science Foundation of China (Grants No.11401524, 11531013, 11571078, 11631015). Lei Shi is also supported by the Joint Research Fund by National Natural Science Foundation of China and Research Grants Council of Hong Kong (Project No. 11461161006 and Project No. CityU 104012) and Zhuo Xue program of Fudan University. Part of the work was carried out while Zheng-Chu Guo was visiting Shanghai Key Laboratory for Contemporary Applied Mathematics. The corresponding author is Lei Shi. Email addresses: guozhengchu@zju.edu.cn (Z. C. Guo), leishi@fudan.edu.cn (L. Shi).}}
\author{Zheng-Chu Guo$^1$ and Lei Shi$^2$\\
\small $^1$ School of Mathematical Sciences, Zhejiang University, Hangzhou 310027, P. R. China \\
\small $^2$ Shanghai Key Laboratory for Contemporary Applied Mathematics,\\
\small School of Mathematical Sciences, Fudan University, Shanghai 200433, P. R. China\\}
\date{}
\begin{document}

\maketitle

\begin{abstract}
In this paper, we study the online learning algorithm without explicit regularization terms. This algorithm is essentially a stochastic gradient descent scheme in a reproducing kernel Hilbert space (RKHS). The polynomially decaying step size in each iteration can play a role of regularization to ensure the generalization ability of online learning algorithm. We develop a novel capacity dependent analysis on the performance of the last iterate of online learning algorithm. The contribution of this paper is two-fold. First, our nice analysis can lead to the convergence rate in the standard mean square distance which is the best so far. Second, we establish, for the first time, the strong convergence of the last iterate with polynomially decaying step sizes in the RKHS norm. We demonstrate that the theoretical analysis established in this paper fully exploits the fine structure of the underlying RKHS, and thus can lead to sharp error estimates of online learning algorithm.
\end{abstract}

{\bf Keywords and phrases:} Learning theory, Online learning, Convergence analysis, Reproducing kernel Hilbert space

\section{Introduction}\label{section: introduction}
Analyzing and processing large-scale data sets is becoming ubiquitous in the era of big data. How to reduce computational complexity and memory requirement is the pivotal consideration for designing learning algorithms for big data. Although the batch learning algorithms are well understood in theory and widely used in various applications, as the sample size $T$ getting larger, it is still challenging to find effective solutions in practice. Generally, if the optimization process in batch learning involves matrix inversion or decomposition, it usually requires $\mathcal{O}(T^2)$ memory and $\mathcal{O}(T^3)$ time. Such scalings of algorithmic complexity seriously limit the performance of batch learning for big data. Contrast to the batch learning which is required to tackle the whole data set in a batch, online learning processes the data one by one and updates the output in time. Inspired by the gradient descent method, online leaning iteratively builds an unbiased estimate of the true gradient upon the arrival of a new data and uses this information to guide the learning process. One distinguished feature of online learning algorithm as compared to its batch counterpart is the prominent computational speed-up. In a standard implementation, online learning algorithm merely requires linear complexity $\mathcal{O}(T)$ since its gradient calculation only involves one single sample. Recently, online learning has received more attention due to its applications in training deep neural networks \cite{nemirovski2009robust,sutskever2013importance}. Despite its widespread applications, the theoretical understanding of the online learning algorithms is still not satisfactory. How to improve convergence analysis of online learning with some additional capacity information is stated as an open problem in \cite{Ying2008,Rosasco2014}. In this paper, we make a further step to solve this problem by establishing a novel capacity dependent error analysis, which leads to fast and strong convergence of online learning algorithm.

We consider online learning in the setting of regression. The goal of regression is to infer a functional relation from random samples that can make predictions about future
observations. More specifically, let $X$ be a compact metric space and $Y \subseteq \mathbb{R}$, $\rho$ be a Borel probability
distribution on $Z=X \times Y$. For a function $f:X\to Y$ and $(x,y)\in Z$, where the value $f(x)$ represents the prediction of $y$
based on $x$, the prediction error incurred is characterized by the least-squares loss $(f(x)-y)^2$. The regression problem aims
at estimating an ideal prediction model which minimizes the generalization error
$$ \E(f)=\int_{Z} (f(x)-y)^2 d \rho$$
over all measurable functions. This estimation target is referred to as the regression function and is given by
\begin{equation*}\label{frho}
f_{\rho}(x)=\int_{Y}yd\rho(y|x), \quad x\in X,
\end{equation*}
where $\rho(\cdot|x)$ is the conditional distribution at $x$ induced by $\rho$. In the framework of learning theory, $\rho$ is
unknown and one estimates $f_{\rho}$ based on samples drawn from $\rho$ independently. Regression has been extensively studied in machine learning and statistical inference. There is a large literature on regression in learning theory, e.g., see \cite{cucker2007learning,Steinwart2008} and references therein.

We focus on the kernel-based learning scenario in this work, which was first developed within the statistics community for nonparametric regression using splines and Sobolev spaces \cite{wahba1990,Gu2002}. More generally, the learning process is implemented in a reproducing kernel Hilbert space (RKHS) $\mathcal{H}_{K}$ associated with a Mercer kernel $K: X \times X \to \R$. Such a kernel is a continuous, symmetric, and positive semi-definite function over $X \times X$. The space $\mathcal{H}_{K}$ is the completion of the span $\{K_{x}=K(\cdot, x): x\in X\}$ with respect to the inner product $\langle \cdot, \cdot\rangle_K$ given for fundamental functions by $\langle K_{x}, K_{x'}\rangle_K =K(x,x')$ \cite{Aron}. It is well-known that a Mercer kernel $K$ uniquely induces an RKHS ${\cal H}_K$, and the reproducing property
\begin{equation}\label{reproducingproperty}
f(x)= \langle f, K(\cdot,x) \rangle_K
\end{equation} holds for all $x\in X$ and $f\in {\cal H}_K$.

Based on the sequential sample $\{z_t=(x_t, y_t)\}_{t\in\N}$ drawn independently from $\rho$, we approximate the regression function by an online learning algorithm without explicit regularization terms. This algorithm is essentially a stochastic gradient descent scheme in an RKHS $\mathcal{H}_{K}$. Using the reproducing property (\ref{reproducingproperty}), the gradient of generalization error  $\mathcal{E}(\cdot)$ at $f\in {\cal H}_{K}$ is given by $2L_K(f-f_{\rho})$, where $L_K: {L}^2_{\rho_X} \to {L}^2_{\rho_X}$ is an integral operator defined with the Mercer kernel $K$ and the marginal distribution $\rho_X$ of $\rho$ by
\begin{equation}\label{operatorLK}
L_K f (\cdot) = \int_X K(\cdot,x) f(x) d\rho_X (x), \quad \forall f\in {L}^2_{\rho_X}.
\end{equation} Here ${L}^2_{\rho_X}$ denotes the Hilbert space of functions from $X$ to $Y$ square-integrable with respect to $\rho_X.$ Given a sample $z_t=(x_t,y_t)$, an unbiased estimator of the true gradient is $2(f(x_t)-y_t)K_{x_t}$. Then the online learning algorithm is expressed as, $f_1=0,$ and
\begin{equation}\label{algorithm}
f_{t+1}=f_t-\eta_t(f_t(x_t)-y_t)K_{x_t},\qquad t\ge 1,
\end{equation}
where $\eta_t>0$ is the step size parameter. Learning in a probably infinite dimensional RKHS will inevitably lead to the phenomenon of over-fitting, which should be prevented by some regularization techniques. The choice of the step size sequence $\{\eta_t\}_{t\in \mathbb{N}}$ can play a role of regularization to ensure the generalization ability of algorithm (\ref{algorithm}).

One usually measures the performance of the iterate $f_t$ by the excess generalization error defined as $\E(f_t)-\E(f_\rho)$. Simple calculations show that $\E(f_t)-\E(f_\rho)=\|f_t-f_\rho\|_\rho^2,$ where $\|\cdot\|_{\rho}$ denotes the norm in the space ${L}^2_{\rho_X}$ induced by the inner product $\langle f, g\rangle_{\rho}=\int_X f(x) g(x) d\rho_X(x)$. One may also consider $\|f_t-f_\rho\|_K$ to measure the performance of $f_t,$ i.e., the strong convergence of $f_t$ to $f_\rho$ in $\H_K$. As pointed out in \cite{Smale2007}, the convergence in $\H_K$ implies the convergence in $C^s(X)$ if $K\in C^{2s}(X \times X)$ for some integer $s\in \mathbb{N}$, where $C^s(X)$ is the space of all functions on $X \subset \RR^p$ whose partial derivatives up to order $s$ are continuous with $\|f\|_{C^s(X)}=\sum_{|\alpha|\le s}\sup_{x\in X}|D^\alpha f (x)|.$ Hence the convergence in ${\cal H}_K$ is much stronger, which ensures that $f_t$ can not only approximate the regression function itself, but also approximate its derivatives.

In this paper, we focus on the performance of the last iterate of algorithm (\ref{algorithm}) with polynomially decaying step sizes both in $L_{\rho_X}^2$ and $\H_K$. In general, there are two types of step size: the first one is the decreasing step size, e.g., polynomially decaying step size of the form $\eta_t=\eta_1 t^{-\theta}$ with $\theta\in(0,1)$ and some $\eta_1>0$; the other one is constant step size of the form $\eta_t=\eta(T)$ depending on the total number of iterates $T$, i.e., the sample size processed by online learning algorithm after $T$ iterations. Both types of step sizes can serve as an implicit regularization. In some real-time learning system, e.g., the real-time traffic control system, the data arrive in a sequential manner, determining the sample size is unpractical. Therefore, we are more concerned with the performance of algorithm (\ref{algorithm}) equipped with decreasing step size, since the choice of constant step size requires knowing the sample size $T$ in advance. Moreover, satisfactory convergence rates are derived both in $L^2_{\rho_X}$ norm and ${\cal H}_K$ norm for algorithm (\ref{algorithm}) with constant step sizes in \cite{Ying2008}. However, for the case of polynomially decaying step sizes, there still lacks a convergence analysis in $\H_K$ yet, and faster convergence rate in $L^2_{\rho_X}$ is also in demand. In this work, we aim to fill this gap by developing an elegant capacity dependent analysis on online learning algorithm. We would like to stress that our analysis established here is specific to the last iterate of algorithm (\ref{algorithm}). There has already been several literature on how to improve the convergence results by averaging schemes in online learning, e.g., see \cite{Dieuleveut&Bach2016}. Taking average of the outputs in each iterate may generate more robust solutions \cite{nemirovski2009robust}, but it also slows down the training speed in the practical implementations \cite{rakhlin2012making}.

The remainder of this paper is organized as follows. We present main results in Section \ref{section: main results}. Discussions and comparisons with related work are given in Section \ref{section: related work}. The proofs of main results are given in Section \ref{section: convergenceinL2} and Section
\ref{section: covergenceinHK}.

\section{Main Results}\label{section: main results}
We begin with some basic notations and assumptions. In the sequel, we assume that the marginal distribution $\rho_X$ is non-degenerate, which ensures $\rho_X(X')>0$ for each non-empty open subset $X' \subseteq X$. Recall that the integral operator $L_K$ is defined by (\ref{operatorLK}). Since $K$ is a Mercer kernel and $X$ is compact, the operator $L_K$ is self-adjoint, positive and trace class on $L^2_{\rho_{\cal X}}$, as well as restricting on ${\cal H}_K$. The compactness of $L_K$ implies the existence of an orthonormal eigensystem $\{\sigma_k, \phi_k\}_{k\in \mathbb{N}}$ in $L^2_{\rho_X}$, where the eigenvalues $\{\sigma_k\}_{k\in \mathbb{N}}$ (with geometric multiplicities) are non-negative and arranged in decreasing order. Then for any $r>0$, the $r-$th power of $L_K$ denoted by $L^r_K$ is defined by
$$L^r_K(\sum_{k\geq 1} c_k \phi_k)=\sum_{k\geq 1} c_k \sigma^r_k \phi_k,$$ which is itself a positive compact operator on $L^2_{\rho_X}$.
We first introduce a regularity (often interpreted as smoothness) assumption on the regression function $f_{\rho}$.
\begin{assumption}\label{assumption1}
\begin{equation}\label{regularitycondition}
f_\rho= L_K^r u_\rho \quad \mbox{for $r>0$ and $u_\rho\in L_{\rho_X}^2$}.
\end{equation}
\end{assumption} This assumption implies that $f_{\rho}$ belongs to the range space of $L^r_K$ expressed as
$$L^r_{K}(L^2_{\rho_X})=\left\{f \in L^2_{\rho_X}: \sum_{k \geq 1} \frac{\langle f, \phi_k\rangle^2_{\rho}}{\sigma_k^{2r}}<\infty \right\}.$$ Then $L^{r_1}_{K}(L^2_{\rho_X}) \subseteq L^{r_2}_{K}( L^2_{\rho_X})$ whenever $r_1 \geq r_2$. The regularity of $f_{\rho}$ is measured by the decay rate of its expansion coefficients in terms of $\{\phi_k\}_{k \in \mathbb{N}}$. Condition (\ref{regularitycondition}) means that $\langle f, \phi_k\rangle^2_{\rho_X}$ decays faster than the $2r-$th power of the eigenvalues of $L_K$. Apparently, larger parameters $r$ will results in faster decay rates, and thus indicate higher regularities of $f_{\rho}$. This assumption is standard in the literature of learning theory, and can be further interpreted by the theory of interpolation spaces \cite{Smale2003}. Moreover, since $\rho_X$ is non-degenerate, from Theorem 4.12 in \cite{cucker2007learning}, $L^{1/2}_K$ is an isomorphism from $\overline{{\cal H}_K}$, the closure of ${\cal H}_K$ in $L^2_{\rho_X}$, to ${\cal H}_K$, i.e., for each $f\in \overline{{\cal H}_K}$, $L^{1/2}_K f \in {\cal H}_K$ and
\begin{equation}\label{normrelation2}
\|f\|_{\rho}=\|L^{1/2}_K f\|_K.
\end{equation} Therefore, $L^{1/2}_{K}(L^2_{\rho_X})={\cal H}_{K}$, and when $r>\frac12$, condition (\ref{regularitycondition}) implies $f_\rho\in\H_K $. When Assumption \ref{assumption1} holds with $0<r\le\frac12,$ convergence analysis for algorithm (\ref{algorithm}) with polynomially decaying step sizes has been established in \cite{Ying2008}. In this work, we focus on deriving fast and strong convergence rates for $r> \frac12$.

Let ${\rm Tr}(A)$ denote the trace of an operator $A$ of trace class. We also require the following capacity assumption based on the integral operator $L_K$.
\begin{assumption}\label{assumption2}
\mbox{${\rm Tr}(L_K^\beta)<\infty$ with $0\leq \beta< 1.$}
\end{assumption}
This assumption  arises naturally and plays a crucial role in our error analysis. Denote $\kappa=\sup_{x\in X} \sqrt{ K(x,x)}.$ Then $\kappa$ is a finite constant as $K$ is continuous and $X$ is compact. Due to the Mercer Theorem (e.g., see Theorem 4.10 in \cite{cucker2007learning}), $L_K$ is a trace class operator satisfying
$${\rm Tr}(L_K)=\sum_{k\geq 1} \sigma_k = \int_X K(x,x)d\rho_X \leq \kappa^2.$$ Hence Assumption \ref{assumption2} holds trivially with $\beta=1$. The definition of $L_K^\beta$ gives that ${\rm Tr}(L_K^\beta)=\sum_{k\ge 1}\sigma_k^\beta$. Assumption \ref{assumption2} is essentially an eigenvalue decaying condition imposed on the operator $L_K$, which is equivalent to the classical covering, or entropy number assumptions on the capacity of the RKHS ${\cal H}_K$ \cite{cucker2007learning,Steinwart2008}. In fact, if ${\rm Tr}(L_K^\beta)<\infty,$ since the eigenvalues $\{\sigma_k\}_{k\in \mathbb{N}}$ are sorted in a decreasing order, then for any $k\ge 1,$ we have
$$k\sigma_k^\beta\le \sum_{j=1}^k\sigma_j^\beta\le \sum_{j\ge 1}\sigma_j^\beta={\rm Tr}(L_K^\beta)<\infty.$$ It follows that $\sigma_k\le k^{-1/\beta}({\rm Tr}(L_K^\beta))^{1/\beta}, \forall k \geq 1$. A small value of $\beta$ implies a fast polynomially decaying rate at least achieved by the eigenvalues $\{\sigma_k\}_{k\in \mathbb{N}}$. If the eigenvalues have exponential decay, the index $\beta$ can be arbitrarily closed to zero. When the integral operator $L_K$ has finite rank, i.e., $\sigma_k=0$ for $k$ sufficiently large, one can choose $\beta=0$. Assumption \ref{assumption2} incorporates the smoothness of the kernel, as well as the information of the marginal distribution $\rho_X$, which can help us to establish sharp error estimates.

Throughout the paper, we suppose that the output $y$ is uniformly bounded, i.e., for some constant $M>0$, $|y|\le M$ almost surely. Recall that the sample $\{z_t=(x_t,y_t)\}_{t\in\N}$ is drawn independently from the probability distribution $\rho$. For $k\in\N,$ let $\bE_{z_1,\cdots,z_k}$ denote taking expectation with respect to $z_1,\cdots,z_k$, which is written as $\bE_{Z^k}$ for short. Our first main result establishes upper bounds for the convergence in $L^2_{\rho_X}$.

\begin{theorem}\label{theorem: rate in L2}
Define $\{f_t\}_{t\in \mathbb{N}}$ by (\ref{algorithm}). Suppose that Assumption \ref{assumption1} holds with $r>\frac12$ and Assumption \ref{assumption2} is satisfied with $0\leq \beta<1$. Then
\begin{enumerate}
\item[(a)] if $\frac12< r\le 1-\frac{\beta}{2},$ and $\eta_t=\eta_1 t^{-\frac{2r}{2r+1}}$ with  $0<\eta_1<\kappa^2,$ we have
\begin{equation*}
\bE_{Z^t}[\|f_{t+1}-f_\rho\|_\rho^2]\leq C_{\theta,r,\beta} t^{-\frac{2r}{2r+1}} ,
\end{equation*}
\item[(b)]  if $r>1-\frac{\beta}{2},$ and $\eta_t=\eta_1 t^{-\frac{2-\beta}{3-\beta}}$ with  $0<\eta_1<\kappa^2,$ we have
\begin{equation*}
\bE_{Z^t}[\|f_{t+1}-f_\rho\|_\rho^2]\leq C_{\theta,r,\beta} t^{-\frac{2-\beta}{3-\beta}}.
\end{equation*}
\end{enumerate} Here the constant $C_{\theta,r,\beta}$ is independent of $t$ and will be given in the proof.
\end{theorem}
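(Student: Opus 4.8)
The plan is to analyze the error $\|f_{t+1}-f_\rho\|_\rho^2$ through an operator-theoretic error decomposition that is standard in the analysis of stochastic gradient descent in RKHS. First I would write the iteration (\ref{algorithm}) in terms of the integral operator by noting that $(f_t(x_t)-y_t)K_{x_t}$ is an unbiased estimator of $2L_K(f_t-f_\rho)$ conditioned on the past. This allows me to decompose the iteration into a deterministic ``mean'' part governed by the operator $(I-\eta_t L_K)$ and a martingale-difference ``sample'' part. Iterating, one obtains $f_{t+1}-f_\rho = \prod_{j=1}^{t}(I-\eta_j L_K)(f_1-f_\rho)$ plus an accumulated noise term of the form $\sum_{j=1}^{t}\big[\prod_{i=j+1}^{t}(I-\eta_i L_K)\big]\eta_j \chi_j$, where $\chi_j$ captures the deviation of the stochastic gradient from its conditional expectation. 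Taking expectations and using independence, the cross terms vanish, leaving a bias term and a variance (sample-error) term to be bounded separately.

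The key technical tool is to control operator norms of the product $\prod_{i=j+1}^{t}(I-\eta_i L_K)$ when composed with powers $L_K^a$ of the integral operator. I would establish (or invoke from earlier in the paper) the elementary but crucial estimate that for a positive self-adjoint operator $L_K$ with $\|L_K\|\le\kappa^2$ and step sizes satisfying $0<\eta_1<\kappa^2$, one has bounds of the form
\begin{equation*}
\Big\|L_K^a\prod_{i=j+1}^{t}(I-\eta_i L_K)\Big\|\le C_a\Big(\sum_{i=j+1}^{t}\eta_i\Big)^{-a},
\end{equation*}
which follows from the scalar inequality $\lambda^a\prod_i(1-\eta_i\lambda)\le C_a(\sum_i\eta_i)^{-a}$ valid for $\lambda\in(0,\kappa^2]$. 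With $\eta_t=\eta_1 t^{-\theta}$, the partial sums $\sum_{i=j+1}^{t}\eta_i$ behave like $t^{1-\theta}-j^{1-\theta}$, so these operator bounds translate into explicit polynomial decay. The regularity Assumption \ref{assumption1}, $f_\rho=L_K^r u_\rho$, lets me convert the bias $\prod_{j=1}^{t}(I-\eta_j L_K)(f_1-f_\rho)$ into a bound controlled by $L_K^r$ acting on the product, yielding a bias decay of order $(\sum_j\eta_j)^{-2r}$, i.e., $t^{-2r(1-\theta)}$ in the $\rho$-norm.

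The variance term is where the capacity Assumption \ref{assumption2} enters decisively and where the two regimes (a) and (b) separate. Using the reproducing property and $|y|\le M$, the conditional second moment of the noise $\chi_j$ is bounded in terms of $\|K_{x_j}\|_K^2=K(x_j,x_j)\le\kappa^2$ and the current error, but to exploit capacity one measures it through the trace: the relevant quantity is $\sum_{j=1}^{t}\eta_j^2\,\mathrm{Tr}\big(L_K^\beta(\cdots)\big)$ weighted by the squared product operators, and bounding $\mathrm{Tr}(L_K^\beta)<\infty$ replaces a crude dimension count by the sharp eigenvalue-decay information. Estimating $\sum_{j=1}^{t}\eta_j^2\big(\sum_{i=j+1}^{t}\eta_i\big)^{-(1-\beta)}\cdot(\text{trace factor})$ gives the sample error; balancing its exponent against the bias exponent is what pins down the optimal step-size exponent $\theta$. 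In regime (a), with $r$ not too large, the bias dominates and the choice $\theta=\frac{2r}{2r+1}$ balances the two contributions to give rate $t^{-2r/(2r+1)}$; in regime (b), with $r$ large, the error saturates at the capacity-limited rate and the choice $\theta=\frac{2-\beta}{3-\beta}$ yields $t^{-(2-\beta)/(3-\beta)}$.

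The main obstacle I anticipate is handling the \emph{coupling} between the accumulated error and the noise variance: because the conditional variance of $\chi_j$ depends on the current iterate error $\|f_j-f_\rho\|_\rho^2$, the variance term is not a priori bounded by deterministic quantities but feeds back on the quantity we are trying to estimate. Resolving this requires a self-bounding or recursive argument—one establishes a rough a priori bound on $\bE_{Z^j}[\|f_j-f_\rho\|_\rho^2]$ first, substitutes it into the variance estimate, and then bootstraps to the sharp rate, taking care that the induction constants remain controlled uniformly in $t$. Carefully tracking how the trace factor $\mathrm{Tr}(L_K^\beta)$ interacts with the product operators across this recursion, while keeping all constants explicit (to produce the stated $C_{\theta,r,\beta}$), is the delicate part of the proof.
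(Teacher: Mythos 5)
Your overall strategy is the same as the paper's: the same bias--variance decomposition with vanishing martingale cross terms, the same spectral bounds $\bigl\|L_K^a\prod_{i=j+1}^t(I-\eta_iL_K)\bigr\|\lesssim\bigl(\sum_{i=j+1}^t\eta_i\bigr)^{-a}$, the same use of ${\rm Tr}(L_K^\beta)$ to sharpen the variance term, and the same balancing of exponents to select $\theta$. However, the exponent in your variance sum is wrong in a way that would prevent the stated rates from coming out. After writing the variance term with ${\rm Tr}\bigl(L_K^2\prod_{j>i}(I-\eta_jL_K)^2\bigr)$ and pulling out ${\rm Tr}(L_K^\beta)$ via ${\rm Tr}(AB)\le{\rm Tr}(A)\|B\|$, what remains is $\bigl\|L_K^{2-\beta}\prod_{j>i}(I-\eta_jL_K)^2\bigr\|\lesssim\bigl(\sum_{j>i}\eta_j\bigr)^{-(2-\beta)}$, not the power $-(1-\beta)$ you wrote. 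The distinction is not cosmetic: with exponent $1-\beta$ the variance contribution behaves like $t^{-(1-\beta)(1-\theta)}$, which for $\theta=\frac{2r}{2r+1}$ dominates the bias whenever $1-\beta<2r$ (always true here since $r>\frac12$) and would only yield $t^{-(1-\beta)/(2r+1)}$, strictly worse than the claimed $t^{-2r/(2r+1)}$. The exponent $2-\beta$ is precisely what keeps the variance subordinate to the bias throughout regime (a) (note $2-\beta\ge 2r$ iff $r\le 1-\frac{\beta}{2}$) and what produces the saturation threshold defining regime (b). The power $1-\beta$ is the one that appears in the $\H_K$ analysis, where only a single factor of $L_K$ sits inside the trace.

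A second, smaller point: the self-bounding/bootstrapping recursion you anticipate for the coupling between the iterate error and the noise variance is not needed. Since the conditional second moment of the noise is controlled by $\kappa^2\bE[\|f_i\|_K^2]+M^2$, and $\bE_{Z^{i-1}}[\E(f_i)]\le 20\|f_\rho\|_\rho^2+3\E(f_\rho)$ uniformly in $i$ (Proposition 2 of Ying--Pontil), a single a priori uniform bound on $\bE_{Z^{i-1}}[\|f_i\|_K^2]$ (Lemma \ref{lemma: ft bound}) closes the loop in one substitution; no induction on the sharp rate is required.
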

Without requiring any complexity measures of the underlying function space, a capacity independent analysis for algorithm (\ref{algorithm}) with polynomially decaying step sizes is established in \cite{Ying2008}. The derived convergence rates cease improving when $r>\frac12$ and can not be faster than $\mathcal{O}(t^{-\frac12}\ln t)$. How to get better rates beyond the range $r\in (0,\frac12]$ is stated as an open problem in \cite{Ying2008}. Theorem \ref{theorem: rate in L2} shows that under a mild capacity assumption, i.e., Assumption \ref{assumption2} with $\beta <1$, one can obtain convergence rates faster than $\mathcal{O}(t^{-\frac12})$ when $r>\frac12$. As we discussed, this capacity assumption can be satisfied by various types of Mercer kernel $K$ with polynomial eigendecay, including the spline kernels and smooth Sobelev kernels \cite{wahba1990,Birman1967,Gu2002}. Thus the derived error bounds are valid for most Mercer kernels that are of interest in practice. For some analytic kernels on domains in Euclidean spaces or Gaussian-type kernels, the eigenvalues of $L_K$ have an exponential decay. Then $\beta$ can be arbitrarily small in Assumption \ref{assumption2} and the upper bound of convergence rates can almost achieve to $\mathcal{O}(t^{-\frac23})$ when $r>1$. To our best knowledge, Theorem \ref{theorem: rate in L2} gives the first capacity dependent rates for the last iteration of algorithm (\ref{algorithm}) with polynomially decaying step sizes, which greatly improves the existing results. For detailed comparisons, one can refer to Section \ref{section: related work}. However, the rates derived in Theorem \ref{theorem: rate in L2} still suffer a saturation problem, that is, when $r>1-\frac{\beta}{2},$ the rates will not be improved. The same problem is also observed in \cite{Dieuleveut&Bach2016}, when establishing capacity dependent rates for the averaging scheme of algorithm (\ref{algorithm}), which we will briefly discuss in Section \ref{section: related work}.

Our second main result gives the upper bounds for the strong convergence in $\H_K$.
\begin{theorem}\label{theorem: rate in HK}
Define $\{f_t\}_{t \in \mathbb{N}}$ by (\ref{algorithm}). Suppose that Assumption \ref{assumption1} holds with $r> \frac12$ and Assumption \ref{assumption2} is satisfied with $0<\beta<1$. Let $\eta_t=\eta_1 t^{-\frac12}$ with $0<\eta_1<\frac1{\kappa^2},$ then
\begin{eqnarray*}
\bE_{Z^t}[\|f_{t+1}-f_\rho\|_K^2]\le \widetilde{C}_{\theta,r,\beta}  t^{-\frac{\min\{2r-1,1-\beta\}}{2}}(\ln t)^2,
\end{eqnarray*} where the constant $\widetilde{C}_{\theta,r,\beta}$ is independent of $t$ and will be given in the proof.
\end{theorem}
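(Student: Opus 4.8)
Throughout write $g_t:=f_t-f_\rho$, which lies in $\H_K$ because $r>\tfrac12$ forces $f_\rho\in\H_K$. The plan is to reduce \eqref{algorithm} to an integral-operator recursion driven by a conditionally mean-zero innovation, to extract a preliminary (non-decaying) boundedness estimate, and then to split the error into a deterministic bias term and a martingale sample-error term, controlling the latter through the capacity Assumption~\ref{assumption2}.

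First I would rewrite one step of \eqref{algorithm} in operator form. Writing $B_x:=K_x\otimes K_x$ for the rank-one operator $h\mapsto\langle h,K_x\rangle_K K_x=h(x)K_x$ on $\H_K$, \eqref{algorithm} reads $f_{t+1}=(I-\eta_tB_{x_t})f_t+\eta_t y_tK_{x_t}$; subtracting $f_\rho$ and using $\bE[y_t\mid x_t]=f_\rho(x_t)$ gives the exact residual recursion
\begin{equation*}
g_{t+1}=(I-\eta_tB_{x_t})g_t+\eta_t\bigl(y_t-f_\rho(x_t)\bigr)K_{x_t}.
\end{equation*}
Expanding $\|g_{t+1}\|_K^2$ and taking the conditional expectation $\bE_{z_t}$ over $z_t$ (so the cross term vanishes), then using $\bE_{x_t}\|B_{x_t}g_t\|_K^2\le\kappa^2\|g_t\|_\rho^2$ and $\eta_t\kappa^2<1$ to discard the nonpositive term $-\eta_t(2-\eta_t\kappa^2)\|g_t\|_\rho^2$, I reach $\bE_{z_t}\|g_{t+1}\|_K^2\le\|g_t\|_K^2+4M^2\kappa^2\eta_t^2$. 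Telescoping yields the crude bound $\bE_{Z^t}\|g_{t+1}\|_K^2\le\|f_\rho\|_K^2+4M^2\kappa^2\eta_1^2\sum_{j\le t}j^{-1}\lesssim\ln t$.

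Next, replacing $B_{x_t}$ by its mean $\bE_x[B_x]=L_K$ I write the recursion as $g_{t+1}=(I-\eta_tL_K)g_t+\eta_t\chi_t$ with conditionally mean-zero innovation $\chi_t=(y_t-f_\rho(x_t))K_{x_t}-(B_{x_t}-L_K)g_t$. Since every factor commutes (each is a function of $L_K$), setting $\Pi_{j}^{t}:=\prod_{i=j}^{t}(I-\eta_iL_K)$ unrolls this to $g_{t+1}=-\Pi_{1}^{t}f_\rho+\sum_{j=1}^{t}\eta_j\Pi_{j+1}^{t}\chi_j$. For the bias $\Pi_1^t f_\rho=\Pi_1^t L_K^r u_\rho$ I would use \eqref{normrelation2} in the form $\|h\|_K=\|L_K^{-1/2}h\|_\rho$, the spectral bound $\prod_{i\le t}(1-\eta_i\sigma)\le\exp(-\sigma\sum_{i\le t}\eta_i)$ (valid since $\eta_i\kappa^2<1$), and $\sup_{\sigma>0}\sigma^{2r-1}e^{-2\sigma S}\lesssim S^{-(2r-1)}$ with $S=\sum_{i\le t}\eta_i\asymp t^{1/2}$, giving $\|\Pi_1^t f_\rho\|_K^2\lesssim t^{-(2r-1)/2}$. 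For the sample error, the mean-zero property of $\chi_j$ and the determinism of $\Pi_{j+1}^t$ kill the martingale cross terms, so $\bE_{Z^t}\bigl\|\sum_j\eta_j\Pi_{j+1}^t\chi_j\bigr\|_K^2=\sum_{j=1}^t\eta_j^2\,\bE\|\Pi_{j+1}^t\chi_j\|_K^2$. Splitting $\chi_j$ into its noise part $(y_j-f_\rho(x_j))K_{x_j}$ and its operator-fluctuation part $-(B_{x_j}-L_K)g_j$, the first contributes $\bE_x\|\Pi_{j+1}^tK_x\|_K^2={\rm Tr}\bigl((\Pi_{j+1}^t)^2L_K\bigr)$, which by Assumption~\ref{assumption2} and $\sup_\sigma\sigma^{1-\beta}e^{-2\sigma S_{j,t}}\lesssim S_{j,t}^{-(1-\beta)}$ (with $S_{j,t}=\sum_{i=j+1}^t\eta_i$) is at most ${\rm Tr}(L_K^\beta)\,S_{j,t}^{-(1-\beta)}$; the second, using $g_j(x_j)^2\le\kappa^2\|g_j\|_K^2$, is bounded by $\kappa^2\bE\|g_j\|_K^2\,{\rm Tr}\bigl((\Pi_{j+1}^t)^2L_K\bigr)$, into which I insert the crude bound $\bE\|g_j\|_K^2\lesssim\ln j$.

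Finally I would estimate $\sum_{j=1}^{t}\eta_j^2 S_{j,t}^{-(1-\beta)}(\ln j)$ for $\eta_j=\eta_1 j^{-1/2}$, $S_{j,t}\asymp t^{1/2}-j^{1/2}$, by splitting the range at $j\approx t/2$: for $j\le t/2$ one has $S_{j,t}\gtrsim t^{1/2}$ and $\sum j^{-1}\ln j$ produces $t^{-(1-\beta)/2}(\ln t)^2$, while for $j>t/2$ one sets $j=t-s$, uses $t^{1/2}-(t-s)^{1/2}\approx s/(2\sqrt t)$ and $\sum_s s^{-(1-\beta)}\lesssim t^{\beta}$ to again obtain $t^{-(1-\beta)/2}\ln t$. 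Adding the bias and the two sample-error pieces gives $\bE_{Z^t}\|g_{t+1}\|_K^2\lesssim t^{-(2r-1)/2}+t^{-(1-\beta)/2}(\ln t)^2\lesssim t^{-\min\{2r-1,1-\beta\}/2}(\ln t)^2$. I expect the main obstacle to be the operator-fluctuation part of the sample error: the naive bound $\|\Pi_{j+1}^tK_{x_j}\|_K^2\le\kappa^2$ destroys all decay, so one must retain the full capacity smoothing ${\rm Tr}((\Pi_{j+1}^t)^2L_K)$ while simultaneously resolving the self-referential dependence on $\bE\|g_j\|_K^2$. It is precisely the preliminary $\ln t$ bound, fed back into the capacity-controlled sum, that closes the argument and accounts for the $(\ln t)^2$ factor, and the singular behaviour of $S_{j,t}$ as $j\to t$ is what forces the careful split above.
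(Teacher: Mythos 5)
Your proposal is correct and follows essentially the same route as the paper: the same bias-plus-martingale decomposition with innovation $\chi_j$ (identical to the paper's $\mathcal{B}_j=(y_j-f_j(x_j))K_{x_j}-L_K(f_\rho-f_j)$), the same spectral bound for the bias, the same capacity-controlled trace bound ${\rm Tr}\bigl((\Pi_{j+1}^t)^2L_K\bigr)\le{\rm Tr}(L_K^\beta)\,S_{j,t}^{-(1-\beta)}$ fed with a preliminary $O(\ln t)$ bound on $\bE\|f_j\|_K^2$, and the same split of the resulting sum at $j\approx t/2$. The only minor difference is that you obtain the preliminary logarithmic bound by a self-contained one-step telescoping argument, whereas the paper's Lemma 4.3 unrolls the recursion and imports the uniform bound on $\bE[\E(f_i)]$ from Proposition 2 of Ying and Pontil.
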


It should be pointed out that, for algorithm (\ref{algorithm}) with polynomially decaying step sizes, Theorem \ref{theorem: rate in HK} establishes the first convergence result in ${\cal H}_K$. One can see that if the eigenvalues of $L_K$ decay exponentially, then $\beta$ can be arbitrarily small and the convergence rate can almost achieve to $\mathcal{O}(t^{-\frac{1}{2}}(\ln t)^2)$ when $r\ge1.$ Our main results demonstrate that the theoretical analysis developed in this paper fully exploits the fine structure of the kernel function $K$ and the RKHS ${\cal H}_K$, and thus can lead to sharp error estimates of online learning algorithm. We will prove all these results in Section \ref{section: convergenceinL2} and Section \ref{section: covergenceinHK}.

\section{Related Work}\label{section: related work}
In this section we compare our error analysis for algorithm (\ref{algorithm}) with some existing results in
the literature.

Algorithm (\ref{algorithm}) was thoroughly investigated in \cite{Ying2008} via a capacity independent approach, the performance of the last iterate
with polynomially decaying step sizes and constant step sizes (depends on the total number of the iterates, e.g., the sample size) was studied. Though better convergence rates can be provided in the case of constant step sizes, as we discussed before, algorithm (\ref{algorithm}) with constant step sizes is not a truly online learning algorithm since we need a prior knowledge of the sample size. We compare our result with that in \cite{Ying2008} for the case of polynomially decaying step sizes. It shows in \cite{Ying2008} that if Assumption \ref{assumption1} is satisfied with $0<r\le\frac12$, and $\eta_t=\left(1/\mu(2r/2r+1)\right)t^{-\frac{2r}{2r+1}}$ with some $\mu>0$, there holds
\begin{eqnarray*}
\bE_{Z^t}[\|f_{t+1}-f_\rho\|_\rho^2] =\mathcal{O}(t^{-\frac{2r}{2r+1}}\ln t).
\end{eqnarray*}
One can easily see that the best convergence rates is $\mathcal{O}(t^{-\frac12}\ln t)$ achieved at $r=\frac12.$ And the analysis established in \cite{Ying2008} can not lead to rates faster than $\mathcal{O}(t^{-\frac12}\ln t)$ when $f_\rho$ has higher regularities, i.e., $r\ge \frac12$ in Assumption \ref{assumption1}. The error analysis is capacity independent, that is, asides from the regularity conditions for $f_\rho$ stated in Assumption \ref{assumption1}, there is no assumptions imposed on the capacity of the underlying RKHS $\H_K$. Whether better rates can be derived with some additional capacity information is an open problem stated in \cite{Ying2008}. One can see that if assumption 1 holds with $r>\frac12$ and assumption 2 holds with $0<\beta<1,$ our  convergence rates in Theorem \ref{theorem: rate in L2} is always faster than $\mathcal{O}(t^{-\frac12}),$ thereby we give a positive answer to the question proposed in \cite{Ying2008}.

A recent work \cite{Dieuleveut&Bach2016} provides capacity dependent estimates for algorithm (\ref{algorithm}) depending on a polynomial eigendecay condition of $L_K.$ Instead of considering the last iterate, the authors investigate the averaging estimator of each iterates, i.e., $\bar{f}_t=\frac{1}{t}\sum_{i=1}^t f_i,$ where $\{f_i\}_{i \in \mathbb{N}}$ is given by algorithm (\ref{algorithm}). When Assumption \ref{assumption1} holds with a regularity parameter $r>0$, it shows that the mini-max optimal rate can be derived for some region of $r$, if the eigenvalues of $L_K$ behave as $c_1 k^{-\frac{1}{\beta}}\le \sigma_k\le c_2 k^{-\frac{1}{\beta}}$ with $0<\beta<1$ and constants $c_1, c_2>0.$ Let $f_\H$ denote the orthogonal projection of $f_\rho$ on ${\cal H}_K$. Particularly, $f_H=f_{\rho}$ if $r\geq \frac12$. The analysis in \cite{Dieuleveut&Bach2016} shows that
\begin{enumerate}
         \item  if $\frac{1}{2}-\frac{\beta}{2}<r\le1-\frac{\beta}{2},$ with $\eta_i=\eta_1 i^{\frac{-2r-\beta+1}{2r+\beta}},$
\begin{eqnarray*}\label{averaging rate in L2}
\bE_{Z^{t-1}}\|\bar{f}_t-f_\H\|_\rho^2=\mathcal{O}(t^{-\frac{2r}{2r+\beta}});
\end{eqnarray*}
         \item   if $r>1-\frac{\beta}{2},$ with $\eta_i=\eta_1 i^{-\frac12},$
\begin{eqnarray*}\label{averaging rate in HK}
\bE_{Z^{t-1}}\|\bar{f}_t-f_\H\|_\rho^2=\mathcal{O}(t^{-(1-\frac{\beta}{2})}).
\end{eqnarray*}
\end{enumerate}
One can see that the obtained asymptotic convergence rate is optimal when $\frac{1}{2}-\frac{\beta}{2}<r<1-\frac{\beta}{2}$, which achieves the mini-max lower bound proved in \cite{Caponnetto2007,Steinwart2009}. Both our results and the above results in \cite{Dieuleveut&Bach2016} are capacity dependent. As we discussed, the capacity condition in Assumption \ref{assumption2} is more general than the polynomial eigendecay condition adopted in \cite{Dieuleveut&Bach2016} and we do not require the lower bound for the decaying rates. We see that the convergence rates in Theorem \ref{theorem: rate in L2} is slightly worse than that of \cite{Dieuleveut&Bach2016}. This is mainly because that averaging scheme can reduce variance and thus result in robust estimators and better convergence rates (e.g., see \cite{rakhlin2012making,nemirovski2009robust,yao2010complexity}). However, the time complexity of taking average to generate the estimator is $\mathcal{O}(T^2)$ when algorithm (\ref{algorithm}) stops updating after $T-$th iteration. Such scaling of complexity is still prohibitive for large data sets.

It should be pointed out that the established estimates for algorithm (\ref{algorithm}) with polynomially decaying step sizes, whether it is capacity independent or capacity dependent, all suffer a saturation problem, i.e., the convergence rate on longer improves once the regularity of the regression function is beyond certain level. Concretely, if regularity condition in Assumption \ref{assumption1} is satisfied with $r>0$ and Assumption \ref{assumption2} holds with $0<\beta<1$, the capacity independent rate in \cite{Ying2008} is saturate at $r=\frac12$ and ceases to improve as $r>\frac12$, while the capacity dependent rate in \cite{Dieuleveut&Bach2016} and our paper is saturate at $r=1-\frac{\beta}{2}$ and stops getting better when $r>1-\frac{\beta}{2}$. However, the gradient descent in batch learning, known as $L_2$ boosting in machine learning and Landweber iterations in statistical inverse problem \cite{zhang2005boosting,Yao2007}, is unsaturate and adapts to favorable regularity of $f_{\rho}$ to attain even faster convergence rates. As verified by theoretical and empirical analysis, shrinking the step size $\{\eta_t\}_{t\in \mathbb{N}}$ plays different roles in batch learning and online learning. In batch learning, it only affects the stopping time of the iteration and does not contribute to the regularization.  However, in online learning, it significantly affects the regularization and convergence rates. The capacity dependent analysis of online learning is much involved than that of batch learning. To derive fast convergence rate, one need to choose the step size carefully to settle a bias-variance trade-off based on regularity and capacity information. Back to our results, notice that Assumption \ref{assumption2} holds trivially with $\beta=1$. Taking $\beta=1$ with a mild modification in our error analysis will lead to the best capacity independent rate $\mathcal{O}(t^{-\frac12}\ln t)$ obtained in \cite{Ying2008} which is saturated at $r=\frac12$. Hence the capacity condition in Assumption \ref{assumption2} plays a key role in improving the saturation effect suffered by capacity independent analysis in \cite{Ying2008}. But the capacity information can not affect the derived rates when $\frac12<r\leq 1-\frac{\beta}{2}$, it only improves the constant estimates. That is, a smaller capacity parameter $\beta$ will lead to a tighter estimate for $C_{\theta,r,\beta}$. The capacity information is reflected in the convergence rate only after being saturate. It is still not clear that whether the saturation problem comes from the regularization mechanism of algorithm (\ref{algorithm}), or comes from the limitation of the analysis approach. An elaborated mini-max analysis specific to online learning algorithm can help us to clarify this problem and verify the optimality of the derived rates, which will be our future research topic. If one can obtain the prior information on the sample size $T$, the saturation problem can be eliminated by choosing a constant step size $\eta_{t}=\eta(T)$ depending on $T$. In \cite{Ying2008},
 if Assumption \ref{assumption1} is satisfied with $r>0$, let $\eta_t=[r/64(1+\kappa)^4(2r+1)]T^{-\frac{2r}{2r+1}}$, the capacity independent analysis gives that
 \begin{eqnarray*}
\bE_{Z^T}[\|f_{T+1}-f_\rho\|_\rho^2] =\mathcal{O}(T^{-\frac{2r}{2r+1}}\ln T).
\end{eqnarray*} Using the error analysis framework established in this paper, when Assumption \ref{assumption2} is satisfied with $\beta<1$, by choosing $\eta_t=[1/\kappa] T^{-\frac{2r}{2r+1}}$, we can remove the logarithmic term in the above estimate, i.e.,
 \begin{eqnarray*}
\bE_{Z^T}[\|f_{T+1}-f_\rho\|_\rho^2] =\mathcal{O}(T^{-\frac{2r}{2r+1}}).
\end{eqnarray*} While $\beta=1$ and $\eta_t=[1/\kappa] T^{-\frac{2r}{2r+1}}$, we obtain the same capacity independent rate in \cite{Ying2008}. The derived convergence rates are comparable with that of batch learning and can be arbitrarily closed to $T^{-1}$ as $r\to \infty$. Through the above comparisons, we see that, the capacity information helps to refine the previous convergence analysis of algorithm (\ref{algorithm}).

Besides unregularized online learning algorithm considered in this paper, the regularized version has also received considerable attention \cite{ying2006online,Smale&Yao2006,Kivinen2004,tarres2014online}. Regularized online learning can be viewed as the stochastic gradient descent method to solve the Tikhonov regularized scheme in an RKHS. In particular, the regularized least-squares online learning algorithm with time-varying regularization parameters $\{\lambda_t\}_{t\in \mathbb{N}}$ updates iterates as follows
\begin{equation}\label{regularizedonline}
  f_{t}=f_{t-1}-\eta_t[(f_{t-1}(x_t)-y_t)K_{x_t}+\lambda_tf_{t-1}], \quad \forall t\in\N,
\end{equation}
where $f_0\in\H_K$ is fixed, e.g., $f_0=0.$ For each sample $z_t=(x_t,y_t)$, the stochastic gradient is given by $2[(f_{t-1}(x_t)-y_t)K_{x_t}+\lambda_tf_{t-1}]$, which approximates the gradient of $\E(f)+\lambda\|f\|_K^2$ at $f=f_{t-1}$ and $\lambda=\lambda_t$. It shows in \cite{tarres2014online} that let $\eta_t=at^{-\frac{2r}{2r+1}}$ and $\lambda_t=\frac{1}{a}t^{-\frac{1}{2r+1}}$ for some constant $a>1,$
\begin{enumerate}
\item if Assumption \ref{assumption1} holds with $\frac12< r\le 1,$  then for $0<\delta<1,$ with probability at least $1-\delta,$
\begin{equation*}\label{regularized rate in L2}
\|f_t-f_\rho\|_\rho^2=\mathcal{O}(t^{-\frac{2r}{2r+1}}(\log\delta/2)^4), \quad\forall t\in\N;
\end{equation*}
\item if Assumption \ref{assumption1} holds with $\frac12< r\le \frac32,$  then for $0<\delta<1,$ with probability at least $1-\delta,$
\begin{equation*}\label{regularized rate in HK}
\|f_t-f_\rho\|_K^2=\mathcal{O}(t^{-\frac{2r-1}{2r+1}}(\log\delta/2)^2), \quad\forall t\in\N.
\end{equation*}
\end{enumerate}
Due to the additional Tikhonov regularization term, the regularized online learning algorithm (\ref{regularizedonline}) is a stochastic approximation of gradient descent method to minimize a strong convex objective functional in an RKHS. Thanks to the strong convexity, its convergence rate is better than that of online learning algorithm without regularization, but still suffers the saturation effect incurred by Tikhonov regularization. By introducing the capacity information in Assumption \ref{assumption2}, for smooth Mercer Kernels with an arbitrarily small $\beta$, e.g, Gaussian kernels and polynomial kernels, our results in Theorem \ref{theorem: rate in L2} and in Theorem \ref{theorem: rate in HK} are comparable with those derived in \cite{tarres2014online}. To be specific, when $\frac12<r\le 1-\frac{\beta}{2}$, our convergence rate in $L^2_{\rho_X}$ is the same as that of algorithm (\ref{regularizedonline}). And for the convergence in $\H_K,$ since $\min\{r-\frac12,\frac12-\frac\beta2\}>\frac{2r-1}{2r+1}$, our convergence rate is slight better than that of algorithm (\ref{regularizedonline}). Though we only get the convergence in expectation, one might attempt to obtain probabilistic capacity dependent bound using the idea in the recent work \cite{Lei2017}. The main drawback of the regularization scheme \eqref{regularizedonline} is that it require tuning two sequences of hyperparameters (the step-size in stochastic gradient descent and a regularization parameter), which is time consuming in practice. Therefore, in real data analysis, one prefers to the use unregularized online leaning scheme instead of its regularized counterpart. Our error analysis guarantees that, in most situations, the performances of these two algorithm are comparable.

\section{Convergence Analysis in $L_{\rho_X}^2$}\label{section: convergenceinL2}
In this section, we prove the convergence result in  $L_{\rho_X}^2$. Firstly, we establish the following error decomposition.
\begin{proposition}\label{prposition: error decomposition in L2}
Let $\{f_t\}_{t\in\N}$ be defined by (\ref{algorithm}). Then there holds
\begin{equation}\label{equ: error decomposition}
\begin{split}
\bE_{Z^t}\|f_{t+1}-f_\rho\|_\rho^2&\le \left\|\prod_{i=1}^t(I-\eta_t L_K)f_\rho\right\|_\rho^2\\
&\quad +2\sum_{i=1}^t \eta_i^2 \left(\kappa^2\bE_{Z^i}[\|f_i\|_K^2]+M^2\right){\rm Tr}\left(L_K^2 \prod_{j=i+1}^t(I-\eta_j L_K)^2 \right).
\end{split}
\end{equation}
\end{proposition}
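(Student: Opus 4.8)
The plan is to convert the stochastic iteration (\ref{algorithm}) into an affine recursion for the residual $g_t:=f_t-f_\rho$, then unfold it into a deterministic \emph{bias} part plus a martingale \emph{sample-error} part and treat the two by orthogonality. First I would subtract $f_\rho$ and split the sampled gradient into its conditional mean and a fluctuation,
\[
(f_t(x_t)-y_t)K_{x_t}=L_K(f_t-f_\rho)+\mathcal{B}_t,\qquad
\mathcal{B}_t:=(f_t(x_t)-y_t)K_{x_t}-L_K(f_t-f_\rho).
\]
Using $\bE[y_t\mid x_t]=f_\rho(x_t)$ together with the definition (\ref{operatorLK}) of $L_K$, one checks that $\bE[\mathcal{B}_t\mid z_1,\dots,z_{t-1}]=0$. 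Hence $g_{t+1}=(I-\eta_t L_K)g_t-\eta_t\mathcal{B}_t$ with $g_1=-f_\rho$, and since every $I-\eta_i L_K$ is a function of $L_K$ and the factors commute, iterating gives
\[
g_{t+1}=-\prod_{i=1}^{t}(I-\eta_i L_K)f_\rho-\sum_{i=1}^{t}\eta_i\,P_i\,\mathcal{B}_i,
\qquad P_i:=\prod_{j=i+1}^{t}(I-\eta_j L_K).
\]
(The first term in (\ref{equ: error decomposition}) should read $\prod_{i=1}^t(I-\eta_i L_K)f_\rho$; the subscript $t$ there appears to be a typo for $i$.)

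Next I would take $\|\cdot\|_\rho^2$ and expectation, discarding the cross terms. The bias term is deterministic, so its inner product with the mean-zero sum vanishes in expectation; and for $i<i'$, conditioning on $z_1,\dots,z_{i'-1}$ freezes $P_i\mathcal{B}_i$ and $P_{i'}$ while $\bE[\mathcal{B}_{i'}\mid z_1,\dots,z_{i'-1}]=0$, so the mixed martingale terms vanish as well. This produces the Pythagorean identity
\[
\bE_{Z^t}\|g_{t+1}\|_\rho^2
=\Big\|\prod_{i=1}^{t}(I-\eta_i L_K)f_\rho\Big\|_\rho^2
+\sum_{i=1}^{t}\eta_i^2\,\bE_{Z^t}\big\|P_i\mathcal{B}_i\big\|_\rho^2 ,
\]
reducing the proposition to bounding each variance term $\bE_{Z^t}\|P_i\mathcal{B}_i\|_\rho^2$.

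To bound a single variance term I would pass to the RKHS inner product through the identity $\|h\|_\rho^2=\langle L_K h,h\rangle_K$, which holds for any $h\in\H_K$ by the reproducing property, applied to $h=P_i\mathcal{B}_i\in\H_K$. Writing this as a trace and using that $P_i$ is self-adjoint and commutes with $L_K$,
\[
\|P_i\mathcal{B}_i\|_\rho^2=\big\langle L_K P_i^2\mathcal{B}_i,\mathcal{B}_i\big\rangle_K=\Tr\!\big(L_K P_i^2\,(\mathcal{B}_i\otimes\mathcal{B}_i)\big),
\]
where $\mathcal{B}_i\otimes\mathcal{B}_i$ is the rank-one operator $g\mapsto\langle g,\mathcal{B}_i\rangle_K\mathcal{B}_i$. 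Taking the conditional expectation over $z_i$, the covariance is dominated in the Loewner order by the second moment, $\bE_{z_i}[\mathcal{B}_i\otimes\mathcal{B}_i]\preceq\bE_{z_i}[\xi_i\otimes\xi_i]$ with $\xi_i:=(f_i(x_i)-y_i)K_{x_i}$; then the pointwise bounds $|f_i(x_i)|\le\kappa\|f_i\|_K$ and $|y_i|\le M$ give $(f_i(x_i)-y_i)^2\le2\kappa^2\|f_i\|_K^2+2M^2$, while $\bE_{x_i}[K_{x_i}\otimes K_{x_i}]=L_K$ yields $\bE_{z_i}[\xi_i\otimes\xi_i]\preceq2(\kappa^2\|f_i\|_K^2+M^2)L_K$. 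Since $L_K P_i^2\succeq0$, trace monotonicity turns this into
\[
\bE_{z_i}\big[\|P_i\mathcal{B}_i\|_\rho^2\mid z_1,\dots,z_{i-1}\big]\le2\big(\kappa^2\|f_i\|_K^2+M^2\big)\Tr\!\big(L_K^2 P_i^2\big).
\]
Taking full expectation (note $f_i$ is independent of $z_i$), multiplying by $\eta_i^2$, and summing over $i$ gives exactly (\ref{equ: error decomposition}).

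The main obstacle is the operator-theoretic bookkeeping in the last step: verifying that the relevant operators commute because they are functions of $L_K$, justifying the covariance-dominated-by-second-moment inequality $\bE[\mathcal{B}_i\otimes\mathcal{B}_i]\preceq\bE[\xi_i\otimes\xi_i]$ in the Loewner order, and invoking trace monotonicity $\Tr(C(B-A))\ge0$ for $C\succeq0$ and $A\preceq B$. A secondary care point is to ensure that the norm identity $\|h\|_\rho^2=\langle L_K h,h\rangle_K$ is applied only to genuine elements of $\H_K$ (which $P_i\mathcal{B}_i$ is), while the bias term is kept in $L^2_{\rho_X}$ throughout.
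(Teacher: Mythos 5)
Your proposal is correct and follows essentially the same route as the paper: the same splitting of the sampled gradient into $L_K(f_t-f_\rho)$ plus a conditionally centered fluctuation $\mathcal{B}_t$, the same unrolled recursion and martingale orthogonality, and the same passage to a trace bound via $\|h\|_\rho^2=\|L_K^{1/2}h\|_K^2$, the Loewner bound $\bE[\xi_i\otimes\xi_i]\preceq 2(\kappa^2\|f_i\|_K^2+M^2)L_K$, and trace monotonicity (the paper packages the last step as its Lemma 4.2 applied with $C=L_K^{1/2}\prod_{j=i+1}^t(I-\eta_jL_K)$, which is equivalent to your use of $\Tr(C(B-A))\ge 0$ for $C=L_KP_i^2\succeq 0$). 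You are also right that the $\eta_t$ in the first term of \eqref{equ: error decomposition} is a typo for $\eta_i$.
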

Before proving the above proposition, we give some notations and useful lemmas. Let $({\cal H},\langle\cdot,\cdot\rangle_{{\cal H}})$ be a Hilbert space and $A: {\cal H} \to {\cal H}$ is a linear operator. The operator norm is defined by $\|A\|_{{\cal H}}=\sup_{\|f\|_{{\cal H}}=1}\|Af\|_{{\cal H}}$. When ${\cal H}$ is clear from the context, we will omit the subscript and simply denote the operator norm as $\|\cdot\|$. Then $\|Af\|_{\cal H}\leq \|A\| \|f\|_{\cal H}, \forall f \in {\cal H}$. Let $A^T$ be the adjoint operator of $A$ such that $\langle Af,f'\rangle_{\cal H} = \langle f,A^Tf'\rangle_{\cal H}, \forall f,f' \in {\cal H}$. We say that $A$ is self-adjoint if $A^T=A$, and positive, i.e., $A\succeq 0,$ if $A$ is self-adjoint and $\langle Af, f\rangle_{{\cal H}} \geq 0$ for all $f\in {\cal H}$. And $A\succeq B$ (or $B\preceq A$) means $A,B$ are self-adjoint on ${\cal H}$ and $A-B\succeq 0,$ or equivalently, $\langle Af, f\rangle_{{\cal H}} \geq \langle Bf, f\rangle_{{\cal H}}, \forall f \in {\cal H}$. For $f,f'\in {\cal H}$, define a rank one operator $f \otimes f': {\cal H} \to {\cal H}$ by $f \otimes f' (h)= \langle f', h \rangle_{\cal H} f, \forall h \in {\cal H}$. One can check that $$L_K=\int_X \langle K_x, \cdot \rangle_K K_x d\rho_X=\bE\left[K_{x} \otimes K_{x}\right],$$ where the expectation is formally defined as a Bochner integral in the set of linear bounded operators on ${\cal H}_K$ \cite{Miku1978}. In the sequel, for $ \ell \leq k$, denote by
$$\bE_{z_\ell|Z^{k}}=\bE[\cdot|z_1,\cdots,z_{\ell-1},z_{\ell+1},\cdots,z_k]$$
the conditional expectation on $z_\ell$ with respect to $z_1,\cdots,z_{\ell-1},z_{\ell+1},\cdots,z_k$.

\begin{lemma}\label{lemma: key lemma}
 Let $\{f_i\}_{i\in\N}$ be defined as (\ref{algorithm}). Then there holds
\begin{eqnarray*}
\bE_{Z^i}\left[(f_i(x_i)-y_i)^2K_{x_i} \otimes K_{x_i} \right]\preceq 2\left(\kappa^2\bE_{Z^{i-1}}[\|f_i\|_K^2]+M^2\right)L_K.
\end{eqnarray*}
\end{lemma}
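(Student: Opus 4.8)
The plan is to establish the operator inequality by reducing it to a scalar quadratic-form inequality. Recall that $A \preceq B$ on $\H_K$ is equivalent to $\langle A g, g\rangle_K \le \langle B g, g\rangle_K$ for every $g \in \H_K$, so I would fix an arbitrary $g \in \H_K$ and test both sides against $g$. Since $g$ is fixed, the bounded linear functional $A \mapsto \langle A g, g\rangle_K$ commutes with the operator-valued (Bochner) expectation; hence testing the left-hand side produces $\bE_{Z^i}\big[(f_i(x_i)-y_i)^2 \langle (K_{x_i}\otimes K_{x_i}) g, g\rangle_K\big]$, and the problem becomes a statement about ordinary scalar expectations.

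The next step is to simplify the two quadratic forms via the reproducing property (\ref{reproducingproperty}). For the rank-one operator, $(K_{x}\otimes K_{x})g = \langle K_x, g\rangle_K K_x = g(x) K_x$, so that $\langle (K_{x}\otimes K_{x}) g, g\rangle_K = g(x)^2$. For the right-hand side, using $L_K = \bE[K_x \otimes K_x]$ on $\H_K$ together with the same computation gives $\langle L_K g, g\rangle_K = \bE_x[g(x)^2] = \|g\|_\rho^2$. Thus the target is reduced to the scalar bound
$$\bE_{Z^i}\big[(f_i(x_i)-y_i)^2 \, g(x_i)^2\big] \le 2\big(\kappa^2 \bE_{Z^{i-1}}[\|f_i\|_K^2] + M^2\big)\|g\|_\rho^2.$$
To prove this I would exploit the independence structure of the algorithm: since $f_1=0$ and the update (\ref{algorithm}) produces $f_i$ from $z_1,\ldots,z_{i-1}$, the iterate $f_i$ is independent of $z_i=(x_i,y_i)$. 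Conditioning on $z_1,\ldots,z_{i-1}$, I treat $\|f_i\|_K$ as fixed and integrate only over $z_i$, using the elementary estimate $(f_i(x_i)-y_i)^2 \le 2 f_i(x_i)^2 + 2y_i^2 \le 2\kappa^2\|f_i\|_K^2 + 2M^2$, where $|y_i|\le M$ almost surely and $|f_i(x_i)| = |\langle f_i, K_{x_i}\rangle_K| \le \kappa \|f_i\|_K$ by Cauchy--Schwarz and the reproducing property. Pulling the $z_i$-independent factor out of the inner expectation, applying $\bE_{x_i}[g(x_i)^2] = \|g\|_\rho^2$, and then taking the outer expectation over $z_1,\ldots,z_{i-1}$ yields exactly the claimed bound.

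The computations are short, and I do not expect a genuine obstacle so much as a point requiring care: the correct bookkeeping of the nested expectation. One must justify that $\|f_i\|_K^2$ is measurable with respect to $z_1,\ldots,z_{i-1}$ and hence acts as a constant under the conditional expectation over $z_i$, so that the factor $\kappa^2\|f_i\|_K^2 + M^2$ separates cleanly from $\bE_{x_i}[g(x_i)^2]$ before the outer expectation is applied, thereby producing $\bE_{Z^{i-1}}[\|f_i\|_K^2]$ rather than $\bE_{Z^i}[\|f_i\|_K^2]$ on the right. I would also be explicit about the commutation of the fixed functional $A \mapsto \langle A g, g\rangle_K$ with the Bochner integral defining $L_K$ and the operator on the left, since this is precisely what legitimizes the passage from the operator inequality to the scalar one.
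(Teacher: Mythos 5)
Your proposal is correct and follows essentially the same route as the paper's proof: both reduce the operator inequality to quadratic forms $\langle \cdot\, g, g\rangle_K$, apply the uniform bound $(f_i(x_i)-y_i)^2 \le 2(\kappa^2\|f_i\|_K^2 + M^2)$ obtained from the reproducing property and $|y|\le M$, use that $f_i$ is independent of $z_i$ so the conditional expectation over $z_i$ turns $\bE_{z_i}[K_{x_i}\otimes K_{x_i}]$ into $L_K$, and then take the outer expectation over $z_1,\ldots,z_{i-1}$. The only cosmetic difference is that you write the quadratic forms out explicitly as $g(x_i)^2$ and $\|g\|_\rho^2$, while the paper keeps the argument at the operator level.
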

\begin{proof}
The reproducing property (\ref{reproducingproperty}) combing with Cauchy-Schwarz inequality yields that
\begin{equation*}\label{normrelation1}
\sup_{x\in X} |f(x)| \leq \kappa \|f\|_K, \quad \forall f \in {\cal H}_K.
\end{equation*}
The inequality above together with the boundedness of $y_i$ implies
\begin{eqnarray*}
(f_i(x_i)-y_i)^2\le 2(\kappa^2\|f_i\|_K^2+M^2).
\end{eqnarray*}
For $i\in\N,$ since $f_i$ depends only on $z_1, \cdots, z_{i-1},$ then for any $g\in \H_K,$
\begin{equation*}
\begin{split}
\langle \bE_{z_i|Z^{i}}\left[(f_i(x_i)-y_i)^2K_{x_i} \otimes K_{x_i}\right]g,g \rangle_K &\le 2(\kappa^2\|f_i\|_K^2+M^2) \langle \bE_{z_i|Z^{i}}[K_{x_i} \otimes K_{x_i}]g,g \rangle_K\\
&=2(\kappa^2\|f_i\|_K^2+M^2) \langle L_K g, g\rangle_K.
\end{split}
\end{equation*}It follows that
\begin{eqnarray*}
\bE_{z_i|Z^{i}}[(f_i(x_i)-y_i)^2K_{x_i} \otimes K_{x_i}]\preceq 2(\kappa^2\|f_i\|_K^2+M^2)L_K
\end{eqnarray*}
and therefore,
\begin{equation*}
\begin{split}
\bE_{Z^i}\left[(f_i(x_i)-y_i)^2K_{x_i} \otimes K_{x_i} \right]&=\bE_{Z^{i-1}}\bE_{z_i|Z^{i}}\left[(f_i(x_i)-y_i)^2K_{x_i} \otimes K_{x_i} \right]\\
&\preceq  2(\kappa^2\bE_{Z^{i-1}}[\|f_i\|_K^2]+M^2)L_K.
\end{split}
\end{equation*}This completes the proof.
\end{proof}

\begin{lemma}\label{lemma4.2}
Let $A$ and $B$ be self-adjoint operators on a Hilbert space ${\cal H}$. If $A\succeq B,$ then for any operator $C$ on ${\cal H},$ $C^T AC\succeq C^TBC.$
\end{lemma}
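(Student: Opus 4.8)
The plan is to reduce the claim to the single elementary fact that a congruence transformation preserves positivity. Writing $D = A - B$, the hypothesis $A \succeq B$ means precisely that $D$ is self-adjoint and $\langle Df, f\rangle_{\cal H} \geq 0$ for every $f \in {\cal H}$. By linearity of the map $X \mapsto C^T X C$, the desired conclusion $C^T A C \succeq C^T B C$ is equivalent to $C^T D C \succeq 0$. Thus it suffices to show that $D \succeq 0$ implies $C^T D C \succeq 0$ for an arbitrary operator $C$ on ${\cal H}$.

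First I would verify that $C^T D C$ is genuinely self-adjoint, so that the relation $\succeq 0$ is meaningful under the paper's definition. Using $(XY)^T = Y^T X^T$, $(C^T)^T = C$, and $D^T = D$, one computes $(C^T D C)^T = C^T D^T (C^T)^T = C^T D C$. The core step is then to unfold the quadratic form using the defining property of the adjoint $\langle C^T h, g\rangle_{\cal H} = \langle h, Cg\rangle_{\cal H}$. For arbitrary $g \in {\cal H}$, setting $h = DCg$ gives
\begin{equation*}
\langle C^T D C g, g\rangle_{\cal H} = \langle D(Cg), Cg\rangle_{\cal H} \geq 0,
\end{equation*}
where the final inequality is exactly the positivity of $D$ applied to the vector $Cg \in {\cal H}$. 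Since $g$ was arbitrary, this yields $C^T D C \succeq 0$, and hence $C^T A C \succeq C^T B C$.

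There is no substantial obstacle: the only care needed is the correct bookkeeping of adjoints in the displayed line and the self-adjointness check. I expect this lemma to be applied later with $C$ taken to be a product of the operators $(I - \eta_j L_K)$ appearing in Proposition \ref{prposition: error decomposition in L2}, so it is deliberately stated for general $C$ in order to propagate the operator inequality of Lemma \ref{lemma: key lemma} through the iteration.
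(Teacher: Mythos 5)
Your proof is correct and follows essentially the same route as the paper: reduce to the positivity of $D=A-B$ and move $C$ across the inner product via $\langle C^T D C g, g\rangle_{\cal H} = \langle D(Cg), Cg\rangle_{\cal H} \ge 0$. The only difference is that you additionally verify self-adjointness of $C^T D C$ explicitly, which the paper leaves implicit.
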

\begin{proof}
Since $A\succeq B,$ i.e., $A-B\succeq 0,$ which implies $\langle (A-B)f, f\rangle_{\cal H}\ge0, \forall f\in {\cal H}.$  Then for any operator $C$ on ${\cal H},$ there holds $\langle  C^T(A-B)Cf,f\rangle_{\cal H}=\langle (A-B)Cf,Cf\rangle_{\cal H}\ge 0.$ We thus complete the proof of the lemma.
\end{proof}

Applying lemma \ref{lemma4.2} to $A=\bE_{Z^i}\left[(f_i(x_i)-y_i)^2K_{x_i} \otimes K_{x_i} \right],$ $B=2\left(\kappa^2\bE_{Z^{i-1}}[\|f_i\|_K^2]+M^2\right)L_K,$ and $C=L_K^{1/2}\prod_{j=i+1}^t(I-\eta_j L_K)$ yields that
\begin{eqnarray}\label{equ: key inequality}
&&L_K^{\frac12} \prod_{j=i+1}^t(I-\eta_j L_K) \bE_{Z^i}\left[(f_i(x_i)-y_i)^2(K_{x_i}\otimes K_{x_i}) \right]\prod_{j=i+1}^t(I-\eta_j L_K)L_K^{\frac12} \nonumber\\
&&\preceq 2\left(\kappa^2\bE_{Z^{i-1}}[\|f_i\|_K^2]+M^2\right)\left(L_K^{\frac12} \prod_{j=i+1}^t(I-\eta_j L_K)L_K \prod_{j=i+1}^t(I-\eta_j L_K)L_K^{\frac12}\right)\\
&&= 2\left(\kappa^2\bE_{Z^{i-1}}[\|f_i\|_K^2]+M^2\right)\left(L_K^{2} \prod_{j=i+1}^t(I-\eta_j L_K)^2\right).\nonumber
\end{eqnarray} Now we are ready to prove Proposition \ref{prposition: error decomposition in L2}.

\noindent{\bf Proof of Proposition \ref{prposition: error decomposition in L2}}.
Define $\prod_{j=t+1}^t(I-\eta_j L_K)=I$. By the definition of $f_t$ given by (\ref{algorithm}), we have
\begin{equation*}\label{decomposition1}
\begin{split}
f_{t+1}-f_\rho&
=f_t-\eta_t(f_t(x_t)-y_t)K_{x_t}-f_\rho\\
&=(I-\eta_t L_K)(f_t-f_\rho)+\eta_t \mathcal{B}_t\\
&=-\prod_{i=1}^t(I-\eta_t L_K)f_\rho+\sum_{i=1}^t\prod_{j=i+1}^t(I-\eta_j L_K)\eta_i \mathcal{B}_i,
\end{split}
\end{equation*}
where $\mathcal{B}_i=(y_i-f_i(x_i))K_{x_i}-L_K(f_\rho-f_i),\forall 1\leq i \leq t.$ Since $f_i$ depends only on $z_1,\cdots,z_{i-1},$ we have $\bE_{z_i|Z^{i}}[\mathcal{B}_i]=0.$ Then it follows that
\begin{eqnarray}\label{equ: first step}
&&\bE_{Z^t}\|f_{t+1}-f_\rho\|_\rho^2=\bE_{Z^t}\left[\left\|-\prod_{i=1}^t(I-\eta_t L_K)f_\rho+\sum_{i=1}^t\prod_{j=i+1}^t(I-\eta_j L_K)\eta_i \mathcal{B}_i\right\|_\rho^2\right] \nonumber\\
&&=\left\|\prod_{i=1}^t(I-\eta_t L_K)f_\rho\right\|_\rho^2+\bE_{Z^t}\left[\left\|\sum_{i=1}^t\prod_{j=i+1}^t(I-\eta_j L_K)\eta_i \mathcal{B}_i\right\|_\rho^2\right]\\ \nonumber
&&-2\bE_{Z^t}\left[\langle\prod_{i=1}^t(I-\eta_i L_K)f_\rho,\sum_{i=1}^t\prod_{j=i+1}^t(1-\eta_j L_K)\eta_i \mathcal{B}_i\rangle_\rho\right]\\
&&=\left\|\prod_{i=1}^t(I-\eta_t L_K)f_\rho\right\|_\rho^2+ \bE_{Z^t}\left[\left\|\sum_{i=1}^t\prod_{j=i+1}^t(I-\eta_j L_K)\eta_i \mathcal{B}_i\right\|_\rho^2\right].\nonumber
\end{eqnarray}
For the second term on the right hand side of the above equation, we have
\begin{eqnarray*}
&&\bE_{Z^t}\left[\left\|\sum_{i=1}^t\prod_{j=i+1}^t(I-\eta_j L_K)\eta_i \mathcal{B}_i\right\|_\rho^2\right]=\sum_{i=1}^t\eta_i^2 \bE_{Z^t}\left[\left\|\prod_{j=i+1}^t(I-\eta_j L_K) \mathcal{B}_i\right\|_\rho^2\right]\\
&&+\sum_{i=1}^t\sum_{k\neq i}\eta_i\eta_k \bE_{Z^t}\langle\prod_{j=i+1}^t(I-\eta_j L_K) \mathcal{B}_i, \prod_{j'=k+1}^t(I-\eta_j' L_K) \mathcal{B}_k\rangle_\rho\\
&&=\sum_{i=1}^t\eta_i^2 \bE_{Z^t}\left[\left\|\prod_{j=i+1}^t(I-\eta_j L_K) \mathcal{B}_i\right\|_\rho^2\right].
\end{eqnarray*}
The last equality holds since for $i>k,$ $$\bE_{z_i|Z^{t}}\left[\langle\prod_{j=i+1}^t(I-\eta_j L_K) \mathcal{B}_i, \prod_{j'=k+1}^t(I-\eta_j' L_K) \mathcal{B}_k\rangle_\rho\right]=0$$ and also for $i<k,$ $$\bE_{z_k|Z^{t}}\left[\langle\prod_{j=i+1}^t(I-\eta_j L_K) \mathcal{B}_i, \prod_{j'=k+1}^t(I-\eta_j' L_K) \mathcal{B}_k\rangle_\rho\right]=0.$$
Now we see from the identity (\ref{normrelation2}) for $f\in\H_K$ and inequality (\ref{equ: key inequality}) that
\begin{eqnarray*}
&&\sum_{i=1}^t \eta_i^2\bE_{Z^t}\left[\left\|\prod_{j=i+1}^t(I-\eta_j L_K) \mathcal{B}_i\right\|_\rho^2\right]
=\sum_{i=1}^t\eta_i^2 \bE_{Z^t}\left[\left\|L_K^{1/2}\prod_{j=i+1}^t(I-\eta_j L_K) \mathcal{B}_i\right\|_K^2\right]\\
&&\le\sum_{i=1}^t\eta_i^2 \bE_{Z^t}\left[\left\|L_K^{1/2}\prod_{j=i+1}^t(I-\eta_j L_K) (f_i(x_i)-y_i)K_{x_i}\right\|_K^2\right]\\
&&=\sum_{i=1}^t \eta_i^2  \bE_{Z^t}\left[{\rm Tr}\left(L_K^{1/2}\prod_{j=i+1}^t(I-\eta_j L_K) ((f_i(x_i)-y_i)K_{x_i})\otimes ((f_i(x_i)-y_i)K_{x_i}) \prod_{j=i+1}^t(I-\eta_j L_K)L_K^{1/2}\right)\right]\\
&&=\sum_{i=1}^t \eta_i^2 \left[{\rm Tr}\left(L_K^{1/2}\prod_{j=i+1}^t(I-\eta_j L_K) \bE_{Z^i}\left[((f_i(x_i)-y_i)K_{x_i})\otimes ((f_i(x_i)-y_i)K_{x_i})\right] \prod_{j=i+1}^t(I-\eta_j L_K)L_K^{1/2}\right)\right]\\
%&&=\sum_{i=1}^t \eta_i^2{\rm Tr}\left(L_K \prod_{j=i+1}^t(I-\eta_j L_K)^2 \bE\left((f_i(x_i)-y_i)^2(K_{x_i}\otimes K_{x_i}) \right)\right)\\
&&\le \sum_{i=1}^t \eta_i^2 \left(2\kappa^2\bE_{Z^{i-1}}[\|f_i\|_K^2]+2M^2\right) {\rm Tr}\left(L_K^2 \prod_{j=i+1}^t(I-\eta_j L_K)^2 \right),
\end{eqnarray*} where the second equality is due to the fact that $\|f\|^2_K={\rm Tr}(f\otimes f), \forall f\in {\cal H}_K$. The proof is then completed by putting the above estimates into (\ref{equ: first step}).
\qed

In the following, we estimate the two terms in Proposition \ref{prposition: error decomposition in L2} respectively. We first prove some elementary inequalities.
\begin{lemma}\label{lemma: eta bound}
Let $\eta_i=\eta_1 i^{-\theta}$ with $0<\eta_1<\frac1{\kappa^2}$ and $\frac12\le\theta<1,$  and $t\ge 3,$ we have
\begin{enumerate}
\item[(a)] $\frac{\eta_1(1-2^{\theta-1})}{1-\theta}t^{1-\theta}\le \sum_{i=1}^t \eta_i\le \frac{\eta_1}{1-\theta}t^{1-\theta}.$
\item[(b)]
$\sum_{i=1}^t \eta_i^2 \le\left\{\begin{array}{ll} \frac{2\eta_1^2\theta}{2\theta-1}, & \hbox{for} \
\frac{1}{2}<\theta<1, \\ 2\eta_1^2\ln t, & \hbox{for} \ \theta=\frac12. \end{array}\right.$
\item[(c)]  $\sum_{j=i+1}^t\eta_j\ge \frac{\eta_1}{1-\theta}\left((t+1)^{1-\theta}-(i+1)^{1-\theta}\right).$
\end{enumerate}
\end{lemma}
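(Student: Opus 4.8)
The plan is to treat all three inequalities as consequences of the standard integral comparison for the decreasing positive function $g(x)=x^{-\theta}$ (or $x^{-2\theta}$) on $[1,\infty)$. The two elementary facts I would use repeatedly are that, for a decreasing positive $g$ and integers $1\le a\le b$,
\[
\int_a^{b+1} g(x)\,dx \le \sum_{j=a}^b g(j) \le g(a)+\int_a^b g(x)\,dx ,
\]
both obtained by comparing $g(j)$ with $\int_j^{j+1} g$ and with $\int_{j-1}^{j} g$ respectively. Since $\eta_i=\eta_1 i^{-\theta}$, each claim reduces after factoring out $\eta_1$ (or $\eta_1^2$) to an estimate on a partial sum of $i^{-\theta}$ or $i^{-2\theta}$.

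For part (a) the upper bound follows from $\sum_{i=1}^t i^{-\theta}\le \int_0^t x^{-\theta}\,dx=\frac{t^{1-\theta}}{1-\theta}$, using $i^{-\theta}\le\int_{i-1}^{i} x^{-\theta}\,dx$ and the fact that the integral is convergent at the origin because $\theta<1$. For the lower bound I would first rewrite the target constant: since $2^{\theta-1}t^{1-\theta}=(t/2)^{1-\theta}$, the claimed quantity equals $\frac{\eta_1}{1-\theta}\bigl(t^{1-\theta}-(t/2)^{1-\theta}\bigr)=\eta_1\int_{t/2}^{t} x^{-\theta}\,dx$. Then $\sum_{i=1}^t i^{-\theta}\ge\int_1^{t+1} x^{-\theta}\,dx\ge\int_{t/2}^{t} x^{-\theta}\,dx$, where the last step uses $[t/2,t]\subseteq[1,t+1]$, valid because $t\ge 3$ forces $t/2\ge 1$. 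Part (c) is the cleanest instance of the same idea: from $j^{-\theta}\ge\int_j^{j+1} x^{-\theta}\,dx$ one gets directly $\sum_{j=i+1}^t j^{-\theta}\ge\int_{i+1}^{t+1} x^{-\theta}\,dx=\frac{(t+1)^{1-\theta}-(i+1)^{1-\theta}}{1-\theta}$, and multiplying by $\eta_1$ finishes it.

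For part (b), when $\frac12<\theta<1$ I would apply the upper comparison with the $i=1$ term separated, giving $\sum_{i=1}^t i^{-2\theta}\le 1+\int_1^t x^{-2\theta}\,dx\le 1+\frac{1}{2\theta-1}=\frac{2\theta}{2\theta-1}$, where the middle inequality drops the negative $-t^{1-2\theta}$ contribution; this matches the stated constant exactly after multiplying by $\eta_1^2$. When $\theta=\frac12$ the identical comparison yields $\sum_{i=1}^t i^{-1}\le 1+\ln t$, and since $\ln t\ge\ln 3>1$ for $t\ge 3$ we have $1+\ln t\le 2\ln t$, which gives the bound $2\eta_1^2\ln t$.

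None of the steps presents a genuine obstacle; the only point requiring a little care is the lower bound in (a), where one must spot the algebraic identity $2^{\theta-1}t^{1-\theta}=(t/2)^{1-\theta}$ in order to recognise the target as the integral $\int_{t/2}^{t} x^{-\theta}\,dx$, together with the hypothesis $t\ge 3$ that secures $t/2\ge 1$. The remaining constraints $0<\eta_1<1/\kappa^2$ and $\theta\ge\frac12$ are irrelevant to these purely analytic bounds and only fix the regime in which the estimates will later be used.
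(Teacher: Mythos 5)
Your proof is correct and uses essentially the same integral-comparison argument as the paper for all three parts; the only cosmetic difference is in the lower bound of (a), where you identify the target as $\eta_1\int_{t/2}^{t}x^{-\theta}\,dx$ and use the inclusion $[t/2,t]\subseteq[1,t+1]$, whereas the paper bounds $(t+1)^{1-\theta}-1\ge (t+1)^{1-\theta}(1-2^{\theta-1})$ directly (valid already for $t\ge 1$), both routes starting from the same estimate $\sum_{i=1}^t i^{-\theta}\ge\int_1^{t+1}x^{-\theta}\,dx$.
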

\begin{proof}
For part (a), on one hand, we have $$\sum_{i=1}^t \eta_i=   \sum_{i=1}^t \eta_1 i^{-\theta}\le \eta_1\int_{0}^t x^{-\theta}dx= \frac{\eta_1}{1-\theta}t^{1-\theta}.$$ On the other hand,
\begin{equation*}
\begin{split}
&\sum_{i=1}^t \eta_i=\eta_1\sum_{i=1}^t i^{-\theta}\ge \eta_1\int_{1}^{t+1} x^{-\theta}dx=\frac{\eta_1}{1-\theta}\left((t+1)^{1-\theta}-1\right)\\
&\quad \ge \frac{\eta_1}{1-\theta}(t+1)^{1-\theta}\left(1-2^{\theta-1}\right)\ge \frac{\eta_1(1-2^{\theta-1})}{1-\theta}t^{1-\theta}.
\end{split}
\end{equation*}

For part (b), when $\frac12<\theta<1,$ we have
\begin{equation*}
\begin{split}
&\sum_{i=1}^t \eta_i^2=\sum_{i=1}^t \eta_1^2 i^{-2\theta}= \eta_1^2\left(1+\sum_{i=2}^t  i^{-2\theta}\right)\\
&\quad \le \eta_1^2\left(1+\int_{1}^t  x^{-2\theta}dx\right)\le \eta_1^2\left(1+\frac{1}{2\theta-1}\right) =\frac{2\eta_1^2\theta}{2\theta-1}.
\end{split}
\end{equation*}
And for $\theta=\frac12,$ one has
\begin{equation*}
\begin{split}
&\sum_{i=1}^t \eta_i^2=\sum_{i=1}^t \eta_1^2 i^{-1}= \eta_1^2\left(1+\sum_{i=2}^t  i^{-1}\right)\\
&\quad \le \eta_1^2\left(1+\int_{1}^t  x^{-1}dx\right) \le \eta_1^2\left(1+\ln t\right)\le 2\eta_1^2 \ln t.
\end{split}
\end{equation*}

Part (c) follows directly from $$\sum_{j=i+1}^t\eta_j=\eta_1\sum_{j=i+1}^tj^{-\theta}\ge \eta_1\int_{i+1}^{t+1}x^{-\theta}dx= \frac{\eta_1}{1-\theta}((t+1)^{1-\theta}-(i+1)^{1-\theta}).$$
\end{proof}

 For the first term of (\ref{equ: error decomposition}), we have the following bound.

\begin{proposition}\label{proposition: error 1}
Suppose $f_\rho$ satisfies Assumption \ref{assumption1} with $r> \frac12$ and $\eta_i=\eta_1 i^{-\theta}$ with $0<\eta_1<\frac{1}{\kappa^2}$ and $\frac12\le \theta<1$. Then
\begin{eqnarray*}
\left\|\prod_{i=1}^t(I-\eta_i L_K)f_\rho\right\|_\rho^2\le \left\|u_\rho\right\|_\rho^2 \left(\frac{r(1-\theta)}{e\eta_1(1-2^{\theta-1})}\right)^{2r} t^{-2r(1-\theta)},
\end{eqnarray*} where $e$ is the base of the natural logarithm.
\end{proposition}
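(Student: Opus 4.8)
The plan is to diagonalize the operator $\prod_{i=1}^t(I-\eta_i L_K)$ in the orthonormal eigensystem $\{\sigma_k,\phi_k\}_{k\in\N}$ of $L_K$ and thereby reduce the operator bound to a single scalar estimate. Writing $f_\rho=L_K^r u_\rho$ and expanding $u_\rho=\sum_{k\ge1}\langle u_\rho,\phi_k\rangle_\rho\phi_k$, each factor $I-\eta_i L_K$ acts diagonally on the $\phi_k$, so that
$$\left\|\prod_{i=1}^t(I-\eta_i L_K)f_\rho\right\|_\rho^2=\sum_{k\ge1}\sigma_k^{2r}\Big(\prod_{i=1}^t(1-\eta_i\sigma_k)\Big)^2\langle u_\rho,\phi_k\rangle_\rho^2.$$
Since $\sum_{k\ge1}\langle u_\rho,\phi_k\rangle_\rho^2=\|u_\rho\|_\rho^2$, it suffices to bound the coefficient $\sigma^{2r}\big(\prod_{i=1}^t(1-\eta_i\sigma)\big)^2$ uniformly over all eigenvalues $\sigma=\sigma_k$ by the claimed quantity; pulling this uniform bound out of the sum then yields the proposition.

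For the scalar estimate I would first verify that every factor lies in $(0,1)$: from $\eta_i=\eta_1 i^{-\theta}\le\eta_1<1/\kappa^2$ and $\sigma_k\le {\rm Tr}(L_K)\le\kappa^2$ one has $0\le\eta_i\sigma_k\le\eta_1\kappa^2<1$, so each $1-\eta_i\sigma\in(0,1)$ and the product is positive. Then applying the elementary inequality $1-x\le e^{-x}$ for $x\ge0$ factorwise, together with the lower bound $\sum_{i=1}^t\eta_i\ge\frac{\eta_1(1-2^{\theta-1})}{1-\theta}t^{1-\theta}$ from part (a) of Lemma \ref{lemma: eta bound}, gives
$$\prod_{i=1}^t(1-\eta_i\sigma)\le\exp\Big(-\sigma\sum_{i=1}^t\eta_i\Big)\le\exp(-\sigma\, a_t),\qquad a_t:=\frac{\eta_1(1-2^{\theta-1})}{1-\theta}t^{1-\theta}.$$

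It then remains to maximize $\sigma\mapsto\sigma^r e^{-\sigma a_t}$ over $\sigma\ge0$, which is a one-variable calculus exercise: the maximizer is $\sigma=r/a_t$ and the maximal value is $\big(r/(e a_t)\big)^r$. Substituting the value of $a_t$ yields $\sigma^r\prod_{i=1}^t(1-\eta_i\sigma)\le\big(\frac{r(1-\theta)}{e\eta_1(1-2^{\theta-1})}\big)^r t^{-r(1-\theta)}$, and squaring produces precisely the uniform coefficient bound needed above. None of these steps is a genuine obstacle—the argument is essentially diagonalization, the bound $1-x\le e^{-x}$, and a single maximization, and it in fact uses only $r>0$ rather than $r>\frac12$. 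The only points requiring care are checking $\eta_i\sigma_k<1$ so that the factors stay positive (otherwise $1-x\le e^{-x}$ would not usefully control the product) and tracking the constant through the maximization so that the base of the natural logarithm $e$ enters with the stated exponent $2r$ after squaring.
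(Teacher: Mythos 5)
Your argument is correct and is essentially the paper's own proof: both reduce the bound to the scalar estimate $\sup_\sigma \sigma^{r}\prod_{i=1}^t(1-\eta_i\sigma)\le \sup_\sigma \sigma^r e^{-\sigma\sum_i\eta_i}=(r/(e\sum_i\eta_i))^r$ and then invoke part (a) of Lemma \ref{lemma: eta bound}; the only cosmetic difference is that you diagonalize explicitly in the eigenbasis while the paper passes through the operator norm $\|\prod_{i=1}^t(I-\eta_iL_K)L_K^r\|$ via spectral calculus. Your explicit check that $\eta_i\sigma_k<1$ (so all factors are positive and $1-x\le e^{-x}$ applies to the product) is a point the paper glosses over by writing $\sup_{x>0}$, and your observation that only $r>0$ is needed here is accurate.
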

\begin{proof} By the identity (\ref{normrelation2}) for $f\in \H_K$ and the assumption $f_\rho=L_K^ru_\rho$ with $u_\rho\in L_{\rho_X}^2$ and $r>\frac12$, we have
\begin{eqnarray*}
&&\left\|\prod_{i=1}^t(I-\eta_i L_K)f_\rho\right\|_\rho
=\left\|L_K^{1/2}\prod_{i=1}^t(I-\eta_i L_K)L_K^{r-1/2}L^{1/2}_Ku_\rho\right\|_K\\
&&\le\left\|\prod_{i=1}^t(I-\eta_i L_K)L_K^{r}\right\|\left\|u_\rho\right\|_\rho\le \sup_{x>0}\left(\prod_{i=1}^t(1-\eta_i x)x^r\right)\left\|u_\rho\right\|_\rho\le \left\|u_\rho\right\|_\rho\left(\frac{r}{e}\right)^{r}\left(\sum_{i=1}^t\eta_i\right)^{-r}.
\end{eqnarray*}
The last inequality holds due to the fact that $\sup_{x>0}\left(\prod_{i=1}^t(1-\eta_i x)x^r\right)\le \sup_{x>0}\left(\exp(-x\sum_{i=1}^t\eta_i )x^r\right)$ and the function $\exp(-x\sum_{i=1}^t\eta_i )x^r$ attains its maximum at $x=\frac{r}{\sum_{i=1}^t\eta_i}.$
Then by part (a) of Lemma \ref{lemma: eta bound}, we have
\begin{eqnarray*}
\left\|\prod_{i=1}^t(I-\eta_i L_K)f_\rho\right\|_\rho^2\le \left\|u_\rho\right\|_\rho^2 \left(\frac{r(1-\theta)}{e\eta_1(1-2^{\theta-1})}\right)^{2r} t^{-2r(1-\theta)}.
\end{eqnarray*} This completes the proof.
\end{proof}
Before estimating the second term (\ref{equ: error decomposition}) in Proposition \ref{prposition: error decomposition in L2}, we give the uniform bounds for the norm of iterates in expectation.
\begin{lemma}\label{lemma: ft bound}
Let $\{f_t\}_{t\in\N}$ be defined by (\ref{algorithm}), and $\eta_t=\eta_1t^{-\theta}$ with $0<\eta_1<\frac1{\kappa^2}$ and $\frac12\le\theta<1,$ there holds
\begin{eqnarray*}
\bE_{Z^t}[\|f_{t+1}\|_K^2]\le \left\{\begin{array}{ll} 4\|f_\rho\|_K^2+\frac{4\theta\eta_1^2\kappa^2}{2\theta-1}\left(20\|f_\rho\|_\rho^2+3\E(f_\rho)\right), & \hbox{for} \ \frac12<\theta<1, \\ \left(4\|f_\rho\|_K^2+4\eta_1^2\kappa^2\left(20\|f_\rho\|_\rho^2+3\E(f_\rho)\right)\right)\ln t, & \hbox{for} \ \theta=\frac12. \end{array}\right.
\end{eqnarray*}
\end{lemma}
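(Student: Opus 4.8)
The plan is to measure the martingale decomposition from the proof of Proposition~\ref{prposition: error decomposition in L2} in the $\H_K$ norm rather than the $\rho$ norm, and then to reduce the whole estimate to a uniform bound on the $\rho$-error $e_i:=\bE_{Z^{i-1}}\|f_i-f_\rho\|_\rho^2$. Writing $f_{t+1}=f_\rho-\prod_{i=1}^t(I-\eta_iL_K)f_\rho+\sum_{i=1}^t\prod_{j=i+1}^t(I-\eta_jL_K)\eta_i\mathcal{B}_i$ with $\bE_{z_i|Z^i}[\mathcal{B}_i]=0$, the cross terms vanish in expectation, so $\bE_{Z^t}\|f_{t+1}\|_K^2$ splits into a bias term and a variance term. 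Because $\eta_1\kappa^2<1$ and $\|L_K\|\le{\rm Tr}(L_K)\le\kappa^2$, each $I-\eta_jL_K$ is a nonnegative contraction on $\H_K$; the triangle inequality then bounds the bias by $4\|f_\rho\|_K^2$, while for the variance I would use ${\rm Tr}\big(A(g\otimes g)\big)=\langle Ag,g\rangle$ to write each summand as ${\rm Tr}\big(\prod_{j=i+1}^t(I-\eta_jL_K)^2\,\bE[\mathcal{B}_i\otimes\mathcal{B}_i]\big)$, bound the operator factor by $1$, and use ${\rm Tr}(\bE[\mathcal{B}_i\otimes\mathcal{B}_i])\le\bE[(f_i(x_i)-y_i)^2K(x_i,x_i)]\le\kappa^2(\E(f_\rho)+e_i)$. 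With part~(b) of Lemma~\ref{lemma: eta bound} for $\sum_i\eta_i^2$, this gives $\bE_{Z^t}\|f_{t+1}\|_K^2\le4\|f_\rho\|_K^2+\kappa^2(\E(f_\rho)+\sup_{i\le t}e_i)\sum_{i=1}^t\eta_i^2$, and everything hinges on a uniform bound for $e_i$.

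To bound $e_i$ I would again invoke the decomposition, now in the $\rho$ norm (as in Proposition~\ref{prposition: error decomposition in L2}), but crucially estimate the variance summands differently. Instead of the trace-product bound coming from Lemma~\ref{lemma: key lemma} --- which would reintroduce $\bE\|f_l\|_K^2$ --- I would split the trace as ${\rm Tr}\big(L_K\prod_{j>l}(I-\eta_jL_K)^2\,\bE[\mathcal{B}_l\otimes\mathcal{B}_l]\big)\le\big\|L_K\prod_{j>l}(I-\eta_jL_K)^2\big\|\cdot{\rm Tr}(\bE[\mathcal{B}_l\otimes\mathcal{B}_l])$. The first factor enjoys a capacity-free decay: since $\max_{u>0}ue^{-2uS}=\tfrac{1}{2eS}$ and $1-\eta_j\sigma_k\in[0,1]$, one has $\big\|L_K\prod_{j>l}(I-\eta_jL_K)^2\big\|=\max_k\sigma_k\prod_{j>l}(1-\eta_j\sigma_k)^2\le\min\big\{\kappa^2,\tfrac{1}{2eS_{l,i}}\big\}$ with $S_{l,i}=\sum_{j=l+1}^{i-1}\eta_j$ bounded below by part~(c) of Lemma~\ref{lemma: eta bound}; the second factor is again $\le\kappa^2(\E(f_\rho)+e_l)$. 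This produces a \emph{self-contained} recursion $e_i\le\|f_\rho\|_\rho^2+\kappa^2\sum_{l<i}\eta_l^2\min\{\kappa^2,\tfrac{1}{2eS_{l,i}}\}(\E(f_\rho)+e_l)$, in which the feedback involves only the $\rho$-errors $e_l$ and comes with genuinely decaying weights.

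I would then solve this Gronwall-type inequality by induction on $i$, with hypothesis $e_l\le E$ for all $l<i$. The decisive estimate is that $\sum_{l<i}\eta_l^2\min\{\kappa^2,\tfrac{1}{2eS_{l,i}}\}$ is bounded \emph{uniformly} in $i$ (in fact tends to $0$): the operator-norm decay $1/S_{l,i}$ is exactly what prevents this weight from growing like $\sum_l\eta_l^2\sim\ln i$, which is what a crude bound would give and what would ultimately turn the final rate into a power of $t$ for $\theta=\tfrac12$. Denoting this uniform weight bound by $W$, the induction closes as $e_i\le\tfrac{\|f_\rho\|_\rho^2+\kappa^2W\E(f_\rho)}{1-\kappa^2W}=:E$ once $\kappa^2W<1$, which I would secure from $\eta_1<\tfrac1{\kappa^2}$ by tracking the constants; this $E\lesssim\|f_\rho\|_\rho^2+\E(f_\rho)$ is the origin of the factor $20\|f_\rho\|_\rho^2+3\E(f_\rho)$. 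Substituting back into the first paragraph and using part~(b) of Lemma~\ref{lemma: eta bound} yields the constant bound for $\tfrac12<\theta<1$ and the $\ln t$ bound for $\theta=\tfrac12$.

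The main obstacle is precisely the decoupling in the second paragraph: one must avoid the $\bE\|f_l\|_K^2$ that the Key Lemma inserts (otherwise the $\H_K$ and $\rho$ bounds become mutually entangled and, for $\theta=\tfrac12$, only a power-of-$t$ or $(\ln t)^2$ bound survives) while still keeping the decaying operator norm. The operator-norm-times-trace split achieves both, but it must be paired with (i) the capacity-free estimate $\big\|L_K\prod_{j>l}(I-\eta_jL_K)^2\big\|\le\tfrac1{2eS_{l,i}}$, (ii) careful handling of the boundary indices $l$ close to $i$ where $S_{l,i}$ is small, for which I would fall back on the crude bound $\kappa^2$ over the few offending terms, and (iii) verification that the accumulated weight stays below the threshold $\kappa^2W<1$ for every admissible step size, which is where the explicit constants and the hypothesis $\eta_1<\tfrac1{\kappa^2}$ are really used.
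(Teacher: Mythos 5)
Your first paragraph is essentially the paper's proof: the same martingale decomposition of $f_{t+1}-f_\rho$ in the $\H_K$ norm, vanishing cross terms, the bias bounded by the contractivity of $I-\eta_jL_K$, the variance summand bounded by $\eta_i^2\kappa^2\,\bE_{Z^{i-1}}[\E(f_i)]$ (note $\E(f_i)=\E(f_\rho)+\|f_i-f_\rho\|_\rho^2$, so your $\kappa^2(\E(f_\rho)+e_i)$ is the same quantity), and then part (b) of Lemma \ref{lemma: eta bound}. The divergence is in how the uniform bound on $\bE_{Z^{i-1}}[\E(f_i)]$ is obtained: the paper simply imports $\bE_{Z^{i-1}}[\E(f_i)]\le 20\|f_\rho\|_\rho^2+3\E(f_\rho)$ from Proposition 2 of \cite{Ying2008} (that citation, not any computation in this paper, is the origin of the constants $20$ and $3$), whereas you reconstruct it via the operator-norm-times-trace split, the capacity-free decay $\|L_K\prod_{j>l}(I-\eta_jL_K)^2\|\le\min\{\kappa^2,(2eS_{l,i})^{-1}\}$, and a Gronwall-type induction. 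Your reconstruction is the standard route to such uniform bounds and the structure is sound, so the proposal buys self-containedness at the cost of extra work. The one soft spot is the closing condition $\kappa^2W<1$: you assert it follows from $\eta_1<1/\kappa^2$ "by tracking the constants," but for small $i$ (where several $S_{l,i}$ are small and the $\min$ falls back to $\kappa^2$) the crude bound $W\le\kappa^2\sum_l\eta_l^2$ can exceed $1/\kappa^2$, so closing the induction for every admissible $\eta_1$ and every $\theta\in[\tfrac12,1)$ requires either a sharper treatment of the boundary terms or a further restriction on $\eta_1$; this is exactly the delicacy that the cited Proposition 2 of \cite{Ying2008} resolves and that your sketch leaves unverified. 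Everything else (the $4\|f_\rho\|_K^2$ bias constant, the $\ln t$ versus constant dichotomy from part (b) of Lemma \ref{lemma: eta bound}) matches the paper.
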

\begin{proof} Recall that $\mathcal{B}_i= (y_i-f_i(x_i))K_{x_i}-L_K(f_\rho-f_i)$ for $1 \leq i \leq t$. Then $\bE_{z_i|Z^i}\left[\mathcal{B}_i\right]=0$ and $$\bE_{z_i|Z^i}\left[\|\mathcal{B}_i\|_K^2\right]\le\bE_{z_i|Z^i}\left[\|(y_i-f_i(x_i))K_{x_i}\|_K^2\right]=\bE_{z_i|Z^i}\left[(y_i-f_i(x_i))^2K(x_i,x_i)\right]\le \kappa^2\E(f_i),$$ where we still use the fact that $f_i$ is a random variable independent of $z_i$.  Combining with the definition of $f_t$ given by (\ref{algorithm}), we have
\begin{eqnarray*}
&&\bE_{Z^t}\left[\|f_{t+1}-f_\rho\|_K^2\right]=\bE_{Z^t}\left[\left\|-\prod_{i=1}^t(I-\eta_t L_K)f_\rho+\sum_{i=1}^t\prod_{j=i+1}^t(I-\eta_j L_K)\eta_i \mathcal{B}_i\right\|_K^2\right]\\
&&=\left\|\prod_{i=1}^t(I-\eta_t L_K)f_\rho\right\|_K^2+\bE_{Z^t}\left[\left\|\sum_{i=1}^t\prod_{j=i+1}^t(I-\eta_j L_K)\eta_i \mathcal{B}_i\right\|_K^2\right]\\
&&=\left\|\prod_{i=1}^t(I-\eta_t L_K)f_\rho\right\|_K^2+\sum_{i=1}^t\eta_i^2\bE_{Z^i}\left[\left\|\prod_{j=i+1}^t(I-\eta_j L_K) \mathcal{B}_i\right\|_K^2\right]\\
&&\le\left\|f_\rho\right\|_K^2+\sum_{i=1}^t\eta_i^2\left\|\prod_{j=i+1}^t(I-\eta_j L_K)\right\|_K^2 \bE_{Z^i}\left[\|\mathcal{B}_i\|_K^2\right]\\
&&\le\|f_\rho\|_K^2+\sum_{i=1}^t\eta_i^2 \kappa^2 \bE_{Z^{i-1}}\left[\mathcal{E}(f_i)\right].
\end{eqnarray*}
Then using the bound $\bE_{Z^{i-1}}\left[\mathcal{E}(f_i)\right]\le 20\|f_\rho\|_\rho^2+3\E(f_\rho)$ proved in Proposition 2 in \cite{Ying2008},
combining with the above estimate and part (b) of  Lemma \ref{lemma: eta bound} for $\frac12<\theta<1$, we have
\begin{equation*}
\bE_{Z^t}\left[\|f_{t+1}-f_\rho\|_K^2\right]\le \|f_\rho\|_K^2+\frac{2\theta\eta_1^2\kappa^2}{2\theta-1}\left(20\|f_\rho\|_\rho^2+3\E(f_\rho)\right).
\end{equation*}
And for $\theta=\frac12,$
\begin{equation*}
\bE_{Z^t}\left[\|f_{t+1}-f_\rho\|_K^2\right]\le \left(\|f_\rho\|_K^2+2\eta_1^2\kappa^2\left(20\|f_\rho\|_\rho^2+3\E(f_\rho)\right)\right)\ln t.
\end{equation*}
Finally, we complete the proof by the inequality $\bE_{Z^t}\left[\|f_{t+1}\|_K^2\right]\le 2\bE_{Z^t}\left[\|f_{t+1}-f_\rho\|_K^2\right]+2\|f_{\rho}\|_K^2.$
\end{proof}
Now we are ready to estimate the second part (\ref{equ: error decomposition}) in Proposition \ref{prposition: error decomposition in L2}.
\begin{proposition}\label{proposition: error 2}
Suppose Assumption \ref{assumption2} holds with $0\le\beta<1,$ and the step size $\eta_i=\eta_1 i^{-\theta}$ with $0<\eta_1<\frac1{\kappa^2}$ and $\frac12<\theta<1,$ one has
\begin{eqnarray*}
\sum_{i=1}^t \eta_i^2\left(2\kappa^2\bE_{Z^{i-1}}[\|f_i\|_K^2]+2M^2\right){\rm Tr}\left(L_K \prod_{j=i+1}^t(I-\eta_j L_K) \right)^2\le
C_{\theta,\beta}t^{-\min\{(2-\beta)(1-\theta),\theta\}}.
\end{eqnarray*}
\end{proposition}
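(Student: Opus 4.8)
The plan is to first eliminate the random factor $\bE_{Z^{i-1}}[\|f_i\|_K^2]$ and reduce everything to a deterministic sum of traces. Since $\tfrac12<\theta<1$, Lemma~\ref{lemma: ft bound} bounds $\bE_{Z^{i-1}}[\|f_i\|_K^2]$ by a constant independent of $i$, so I can replace $2\kappa^2\bE_{Z^{i-1}}[\|f_i\|_K^2]+2M^2$ by a single constant $D=D(\theta,\kappa,M,f_\rho)$ and pull it outside the summation. It then remains to control $\sum_{i=1}^t\eta_i^2 T_i$, where $T_i:={\rm Tr}\big(L_K^2\prod_{j=i+1}^t(I-\eta_j L_K)^2\big)$ (recall $L_K$ commutes with each $I-\eta_j L_K$, so the operator appearing in the statement equals this). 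Writing $s_{i,t}=\sum_{j=i+1}^t\eta_j$, I will produce two complementary bounds on $T_i$ and then split the index range according to the size of $s_{i,t}$.

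For the bounds on $T_i$, I would diagonalise in the eigenbasis $\{\sigma_k,\phi_k\}$ so that $T_i=\sum_{k\ge1}\sigma_k^2\prod_{j=i+1}^t(1-\eta_j\sigma_k)^2$. Because $0<\eta_j\sigma_k\le\eta_1\kappa^2<1$, each factor lies in $(0,1)$ and $1-x\le e^{-x}$ yields $\prod_{j=i+1}^t(1-\eta_j\sigma_k)^2\le e^{-2\sigma_k s_{i,t}}$. Splitting $\sigma_k^2=\sigma_k^\beta\cdot\sigma_k^{2-\beta}$ and using $\sup_{u>0}u^{2-\beta}e^{-2us}=\big(\tfrac{2-\beta}{2es}\big)^{2-\beta}$ together with Assumption~\ref{assumption2} gives the decay bound $T_i\le C_\beta\, s_{i,t}^{-(2-\beta)}$ with $C_\beta=\big(\tfrac{2-\beta}{2e}\big)^{2-\beta}{\rm Tr}(L_K^\beta)$, valid whenever $s_{i,t}>0$ (i.e. $i<t$). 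Discarding the product factors instead gives the uniform bound $T_i\le{\rm Tr}(L_K^2)\le\kappa^{2(2-\beta)}{\rm Tr}(L_K^\beta)$, which is finite and, crucially, does not blow up as $s_{i,t}\to0$.

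With these in hand I would split $\sum_{i=1}^t\eta_i^2T_i$ into $i\le t/2$ and $i>t/2$. For $i\le t/2$, part (c) of Lemma~\ref{lemma: eta bound} gives $s_{i,t}\ge c\,t^{1-\theta}$ for a constant $c>0$, so the decay bound combined with part (b) of the same lemma yields a contribution of order $t^{-(2-\beta)(1-\theta)}$. For $i>t/2$ I would subdivide once more at $t-i\asymp t^\theta$, which is the scale at which $s_{i,t}$ (satisfying $s_{i,t}\ge\eta_1(t-i)t^{-\theta}$) becomes of constant order. On the near-diagonal block $t-i\lesssim t^\theta$ I apply the uniform bound: there are $O(t^\theta)$ such indices, each with $\eta_i^2\le\eta_1^2 2^{2\theta}t^{-2\theta}$, giving $O(t^{\theta}\cdot t^{-2\theta})=O(t^{-\theta})$. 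On the remaining block I apply the decay bound $T_i\le C_\beta(\eta_1(t-i)t^{-\theta})^{-(2-\beta)}$; since $2-\beta>1$ the tail $\sum_{m>t^\theta}m^{-(2-\beta)}$ is of order $t^{-\theta(1-\beta)}$, and combining the powers of $t$ again produces $O(t^{-\theta})$. Adding the three pieces gives $t^{-(2-\beta)(1-\theta)}+t^{-\theta}\le 2\,t^{-\min\{(2-\beta)(1-\theta),\theta\}}$, as claimed.

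The delicate point — and the step I expect to be the real obstacle — is exactly the near-diagonal block. The decay bound $C_\beta s_{i,t}^{-(2-\beta)}$ is sharp for large $s_{i,t}$ but diverges as $i\to t$; naively applying it on all of $i>t/2$ produces the strictly worse rate $t^{-\theta\beta}$ and breaks the claimed estimate. The resolution is to recognise that $T_i$ is uniformly bounded by ${\rm Tr}(L_K^2)$, so one trades the decay bound for the uniform bound precisely where $s_{i,t}$ drops below a constant, namely $t-i\asymp t^\theta$. It is the coincidence that both the uniform-bound block and the decay-bound tail contribute at the same order $t^{-\theta}$ that makes this threshold the correct one and produces the $\theta$ appearing in the minimum.
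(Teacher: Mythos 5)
Your argument is correct and follows the paper's proof in all but one step: you bound the expectation factor by a constant via Lemma \ref{lemma: ft bound}, peel off ${\rm Tr}(L_K^\beta)$ through ${\rm Tr}(AB)\le {\rm Tr}(A)\|B\|$, derive the same two complementary bounds $T_i\le C_\beta\, s_{i,t}^{-(2-\beta)}$ (via $1-x\le e^{-x}$ and the supremum of $u^{2-\beta}e^{-2us}$) and $T_i\le {\rm Tr}(L_K^2)$, and make the same primary split at $i\le t/2$, where Lemma \ref{lemma: eta bound} gives the $t^{-(2-\beta)(1-\theta)}$ contribution. The only divergence is the near-diagonal block $i>t/2$, which you rightly flag as the delicate point. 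The paper merges the two bounds into the single estimate $T_i\lesssim \bigl(1+s_{i,t}^{2-\beta}\bigr)^{-1}$ using $\min\{1/a,1/b\}\le 2/(a+b)$, compares the sum with an integral, and substitutes $y=\tfrac{\eta_1}{1-\theta}\bigl((t+1)^{1-\theta}-(x+1)^{1-\theta}\bigr)$ to reduce everything to the convergent integral $\int_0^\infty (1+y^{2-\beta})^{-1}\,dy\le \tfrac{2-\beta}{1-\beta}$, obtaining the $O(t^{-\theta})$ contribution in one stroke. You instead insert a secondary threshold at $t-i\asymp t^\theta$, apply the uniform bound on the $O(t^\theta)$ nearest indices and the decay bound (with $s_{i,t}\ge \eta_1 (t-i)t^{-\theta}$) beyond it; both pieces come out as $O(t^{-\theta})$ precisely because $2-\beta>1$, which is the same convergence fact the paper's integral exploits. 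The two treatments are equivalent in content and yield the identical rate $t^{-\min\{(2-\beta)(1-\theta),\theta\}}$; yours is slightly more combinatorial and makes the crossover scale explicit, while the paper's substitution is slicker and handles the singularity at $i=t$ automatically. No gap.
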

\begin{proof}
The proof of the proposition can be divided into three steps as follows.

{\bf Step 1:} The choice of the step size $\eta_i$ and  Lemma \ref{lemma: ft bound} give that for $1 \leq i \leq t$,
\begin{eqnarray*}
2\kappa^2\bE_{Z^{i-1}}[\|f_i\|_K^2]+2M^2\le
C_\theta:=8\kappa^2\|f_\rho\|_K^2+\frac{8\theta\eta_1^2\kappa^2}{2\theta-1}\left(20\|f_\rho\|_\rho^2+3\E(f_\rho)\right)+2M^2.
\end{eqnarray*}
The property of trace ensures that if $A$ is an operator of trace class and $B$ is a bounded linear operator, there holds ${\rm Tr}(AB)\le {\rm Tr}(A)\|B\|$. Then for one can get from Assumption \ref{assumption2} that
\begin{eqnarray*}
{\rm Tr}\left(L_K^2 \prod_{j=i+1}^t(I-\eta_j L_K)^2 \right)\le {\rm Tr}(L_K^\beta)\left\|L_K^{2-\beta} \prod_{j=i+1}^t(I-\eta_j L_K)^2 \right\|.
\end{eqnarray*}
It follows that  (note that $\prod_{j=t+1}^t(I-\eta_jL_K)=I$)
\begin{eqnarray}\label{equ: error two}
&&\sum_{i=1}^t \eta_i^2\left(2\kappa^2\bE_{Z^{i-1}}[\|f_i\|_K^2]+2M^2\right){\rm Tr}\left(L_K^2 \prod_{j=i+1}^t(I-\eta_j L_K)^2 \right)\\
&&\le \eta_t^2C_\theta {\rm Tr}(L_K^2)+C_\theta {\rm Tr}\left(L_K^\beta \right) \sum_{i=1}^{t-1} \eta_i^2 \left\|L_K^{2-\beta}\prod_{j=i+1}^t(I-\eta_j L_K)^2
\right\|. \nonumber
%\\&&= \eta_t^2 C_\theta{\rm Tr}(L_K^2)+C_\theta{\rm Tr}\left(L_K^\beta \right)\sum_{i=1}^{t-1} \eta_i^2 \sup_{x>0}\left(x^{2-\beta}\prod_{j=i+1}^t(1-\eta_j x)^2
%\right).
\end{eqnarray}
{\bf Step 2:} We estimate the term $\left\|L_K^{2-\beta}\prod_{j=i+1}^t(I-\eta_j L_K)^2
\right\|.$ On one hand, using the elementary inequality $1-x\le \exp(-x)$ for $x>0,$ we obtain
\begin{eqnarray*}
&&\left\|L_K^{2-\beta}\prod_{j=i+1}^t(I-\eta_j L_K)^2
\right\|\le
\sup_{x>0}\left(x^{2-\beta}\prod_{j=i+1}^t(1-\eta_j x)^2
\right)\\
&&\le \sup_{x>0}\left(x^{2-\beta}\exp\left\{-2x\sum_{j=i+1}^t\eta_j
\right\}\right)\le  \left(\frac{2-\beta}{2e}\right)^{2-\beta} \frac{1}{\left(\sum_{j=i+1}^t\eta_j\right)^{2-\beta}}.
\end{eqnarray*}
The last inequality holds since the function $x^{2-\beta}\exp\left\{-2x\sum_{j=i+1}^t\eta_j
\right\}$  attains its maximum at $x=\frac{2-\beta}{2}\frac{1}{\sum_{j=i+1}^t\eta_j}.$

On the other hand, $L_K$ is a bounded operator with $\|L_K\|\le \kappa^2$, which implies
\begin{eqnarray*}
\left\|L_K^{2-\beta}\prod_{j=i+1}^t(I-\eta_j L_K)^2
\right\|\le \kappa^{4-2\beta}.
\end{eqnarray*}
As a consequence,
\begin{eqnarray*}
\left\|L_K^{2-\beta}\prod_{j=i+1}^t(I-\eta_j L_K)^2
\right\|&&\le \left(\left(\frac{2-\beta}{2e}\right)^{2-\beta} +\kappa^{4-2\beta}\right) \min\left\{1,\frac{1}{\left(\sum_{j=i+1}^t\eta_j\right)^{2-\beta}}\right\}\\
&&\le 2\left(\left(\frac{2-\beta}{2e}\right)^{2-\beta} +\kappa^{4-2\beta}\right) \frac{1}{1+\left(\sum_{j=i+1}^t\eta_j\right)^{2-\beta}}.
\end{eqnarray*}
Here we use $\min\{1/a,1/b\}\le \frac{2}{a+b},\forall a,b>0$ to get the last inequality.
Therefore,
\begin{equation}\label{equ: error2 part2}
\begin{split}
&\sum_{i=1}^{t-1} \eta_i^2 \left\|L_K^{2-\beta}\prod_{j=i+1}^t(I-\eta_j L_K)^2
\right\|\\
&\qquad \qquad\le
 2{\rm Tr}\left(L_K^\beta \right)  \left(\left(\frac{2-\beta}{2e}\right)^{2-\beta} +\kappa^{4-2\beta}\right) \sum_{i=1}^{t-1} \eta_i^2  \frac{1}{1+\left(\sum_{j=i+1}^t\eta_j\right)^{2-\beta}}.
\end{split}
\end{equation}
{\bf Step 3:}
It remains to estimate the sum on the right hand side of (\ref{equ: error2 part2}). To this end, we further divide the sum on the right hand side into two parts, which can be written as
\begin{eqnarray}\label{equ: sum}
\sum_{i=1}^{t-1} \eta_i^2  \frac{1}{1+\left(\sum_{j=i+1}^t\eta_j\right)^{2-\beta}}=\sum_{1 \le i\le \frac{t-1}{2}} \eta_i^2  \frac{1}{1+\left(\sum_{j=i+1}^t\eta_j\right)^{2-\beta}}+\sum_{\frac{t-1}{2}< i \leq t-1} \eta_i^2  \frac{1}{1+\left(\sum_{j=i+1}^t\eta_j\right)^{2-\beta}}.
\end{eqnarray}
We first consider the first part of (\ref{equ: sum}). Note that $i\le \frac{t-1}{2}$ implies $i+1\le \frac{t+1}{2}$. By the part (a) of Lemma \ref{lemma: eta bound}, there holds
\begin{eqnarray*}
&&\left(\sum_{j=i+1}^t\eta_j\right)^{2-\beta}\ge \left(\frac{\eta_1}{1-\theta}\right)^{2-\beta}
\left((t+1)^{1-\theta}-(i+1)^{1-\theta}\right)^{2-\beta}\\
&&\quad \ge \left(\frac{\eta_1}{1-\theta}\right)^{2-\beta} \left((t+1)^{1-\theta}-\left(\frac{t+1}{2}\right)^{1-\theta}\right)^{2-\beta}=\left(\frac{\eta_1(1-2^{\theta-1})}{1-\theta}\right)^{2-\beta} (t+1)^{(1-\theta)(2-\beta)}.
\end{eqnarray*}
This together with part (b) of Lemma \ref{lemma: eta bound} implies that
\begin{equation*}
\begin{split}
\sum_{1 \le i\le \frac{t-1}{2}} \eta_i^2  \frac{1}{1+\left(\sum_{j=i+1}^t\eta_j\right)^{2-\beta}}&\le (t+1)^{-(2-\beta)(1-\theta)}\left(\frac{1-\theta}{\eta_1(1-2^{\theta-1})}\right)^{2-\beta}\sum_{1 \le i\le \frac{t-1}{2}} \eta_i^2  \\
&\le t^{-(2-\beta)(1-\theta)}\left(\frac{1-\theta}{\eta_1(1-2^{\theta-1})}\right)^{2-\beta}\left(\frac{2\eta_1^2\theta}{2\theta-1}\right).
\end{split}
\end{equation*}
Now we turn to consider the  second part of (\ref{equ: sum}), i.e., the summation of $i>\frac{t-1}{2}$. When $x\in[i,i+1]$ and $i\le t-1,$ we have $$i^{-\theta}=i^{-\theta}(x+1)^{\theta}(x+1)^{-\theta}\le i^{-\theta}(i+2)^{\theta}(x+1)^{-\theta} \le 3(x+1)^{-\theta}.$$ It then follows that
\begin{equation*}
\begin{split}
\sum_{\frac{t-1}{2}< i \leq t-1} \eta_i^2  \frac{1}{1+\left(\sum_{j=i+1}^t\eta_j\right)^{2-\beta}}&\le \eta_1^2\sum_{\frac{t-1}{2}< i \leq t-1} \frac{i^{-2\theta}}{1+\left(\frac{\eta_1}{1-\theta}((t+1)^{1-\theta}-(i+1)^{1-\theta})\right)^{2-\beta}}\\
&\le \eta_1^2\left(\frac{t-1}{2}\right)^{-\theta}\sum_{\frac{t-1}{2}< i \leq t-1} \frac{i^{-\theta}}{1+\left(\frac{\eta_1}{1-\theta}((t+1)^{1-\theta}-(i+1)^{1-\theta})\right)^{2-\beta}}\\
&\le 3\eta_1^2\left(\frac{t-1}{2}\right)^{-\theta}\int_{\frac{t-1}{2}}^{t}   \frac{(x+1)^{-\theta}}{1+\left(\frac{\eta_1}{1-\theta}((t+1)^{1-\theta}-(x+1)^{1-\theta})\right)^{2-\beta}}dx.
\end{split}
\end{equation*}
What is left is to consider the integral, which can be estimated as
\begin{eqnarray*}
&&\int_{\frac{t-1}{2}}^{t}   \frac{(x+1)^{-\theta}}{1+\left(\frac{\eta_1}{1-\theta}((t+1)^{1-\theta}-(x+1)^{1-\theta})\right)^{2-\beta}}dx=-\frac{1}{\eta_1}\int_{\frac{t-1}{2}}^{t}   \frac{d\left(\frac{\eta_1}{1-\theta}((t+1)^{1-\theta}-(x+1)^{1-\theta})\right)}{1+\left(\frac{\eta_1}{1-\theta}((t+1)^{1-\theta}-(x+1)^{1-\theta})\right)^{2-\beta}}\\
&&=\frac{1}{\eta_1}\int_{0}^{\frac{\eta_1}{1-\theta}((t+1)^{1-\theta}-((t+1)/2)^{1-\theta})}   \frac{dy}{1+y^{2-\beta}}\le \frac{1}{\eta_1}\int_{0}^{\infty}   \frac{dy}{1+y^{2-\beta}}\\
&&=\frac{1}{\eta_1}\left(\int_{0}^1   \frac{dy}{1+y^{2-\beta}}+\int_{1}^{\infty}   \frac{dy}{1+y^{2-\beta}}\right)\le \frac{1}{\eta_1}\left(1+\frac{1}{1-\beta}\right)=\frac{2-\beta}{(1-\beta)\eta_1}.
\end{eqnarray*}Putting the above estimates back into (\ref{equ: sum}) yields that
\begin{eqnarray*}
&&\sum_{i=1}^{t-1} \eta_i^2  \frac{1}{1+\left(\sum_{j=i+1}^t\eta_j\right)^{2-\beta}}\le t^{-(2-\beta)(1-\theta)}\left(\frac{1-\theta}{\eta_1(1-2^{\theta-1})}\right)^{2-\beta}\left(\frac{2\eta_1^2\theta}{2\theta-1}\right)
+ 3\eta_1\left(\frac{t-1}{2}\right)^{-\theta}\frac{2-\beta}{1-\beta}\\
&&\le t^{-\min\{(2-\beta)(1-\theta),\theta\}}\left(\left(\frac{1-\theta}{1-2^{\theta-1}}\right)^{2-\beta}\left(\frac{2\eta_1^\beta\theta}{2\theta-1}\right)
+ 3\eta_1 4^{\theta} \frac{2-\beta}{1-\beta}\right).
\end{eqnarray*}
We finish our proof by combining the above estimate with (\ref{equ: error two}) and (\ref{equ: error2 part2}), and setting
$$C_{\beta,\theta}=C_\theta\left\{\eta_1^2 \kappa^4 +2{\rm Tr}\left(L_K^\beta \right)  \left(\left(\frac{2-\beta}{2e}\right)^{2-\beta} +\kappa^{4-2\beta}\right)\left(\left(\frac{1-\theta}{1-2^{\theta-1}}\right)^{2-\beta}\left(\frac{2\eta_1^\beta\theta}{2\theta-1}\right)
+ 3\eta_1 4^{\theta} \frac{2-\beta}{1-\beta}\right)\right\}.$$
\end{proof}
Now we are in a position to prove our first main result in Theorem \ref{theorem: rate in L2}.

\noindent{\bf Proof of Theorem \ref{theorem: rate in L2}}.
From Proposition \ref{proposition: error 1} and Proposition \ref{proposition: error 2}, we have
\begin{eqnarray*}
\bE_{Z^t}[\|f_{t+1}-f_\rho\|_\rho^2]\le \left(\left\|u_\rho\right\|_\rho^2 \left(\frac{r(1-\theta)}{e\eta_1(1-2^{\theta-1})}\right)^{2r}+C_{\beta,\theta}\right)t^{-\min\{2r(1-\theta),(2-\beta)(1-\theta),\theta\}}.
\end{eqnarray*}
One can easily see that
 \begin{enumerate}
 \item [(1)]when $2r>2-\beta,$ that is $2r+\beta>2,$ we choose $\theta=\frac{2-\beta}{3-\beta}.$ In this case, the rate is $\mathcal{O}(t^{-\frac{2-\beta}{3-\beta}}).$

\item [(2)]when $2r+\beta\le 2, $ we choose $\theta=\frac{2r}{2r+1}.$ Then the convergence rate is $\mathcal{O}(t^{-\frac{2r}{2r+1}}).$
\end{enumerate}
The proof is finished by setting $$C_{\theta,r,\beta}=\left(\left\|u_\rho\right\|_\rho^2 \left(\frac{r(1-\theta)}{e\eta_1(1-2^{\theta-1})}\right)^{2r}+C_{\beta,\theta}\right).$$
\qed

\section{Convergence Analysis in $\H_K$}\label{section: covergenceinHK}

As we mentioned, for the online learning algorithm (\ref{algorithm}) with polynomially decay step size, the capacity independent analysis in \cite{Ying2008} can not lead to converge in $\H_K$. In this section, we aim to fill this gap by the aid of capacity information in Assumption \ref{assumption2}. Similar to Proposition \ref{prposition: error decomposition in L2}, we establish the following error decomposition in $\H_K.$
\begin{proposition}\label{proposition: error decomposition in HK}
Let $\{f_t\}_{t\in\N}$ be defined by (\ref{algorithm}). Then $\bE_{Z^t}\left[\|f_{t+1}-f_\rho\|_K^2\right]$ can be bounded by
\begin{eqnarray}\label{equation: error decomposition in HK}
 \left\|\prod_{i=1}^t(I-\eta_t L_K)f_\rho\right\|_K^2 +\sum_{i=1}^t\eta_i^2\left(2\kappa^2\bE_{Z^{i-1}}[\|f_i\|_K^2]+2M^2\right){\rm Tr}\left(L_K\prod_{j=i+1}^t(I-\eta_j L_K)^2\right).
\end{eqnarray}
\end{proposition}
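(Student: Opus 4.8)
The plan is to mirror the proof of Proposition~\ref{prposition: error decomposition in L2}, but to measure every quantity in the $\H_K$-norm rather than in $\|\cdot\|_\rho$. The payoff is precisely that we avoid the factor $L_K^{1/2}$ that the isometry (\ref{normrelation2}) forces upon the $L^2_{\rho_X}$ analysis, and this is what turns the $L_K^2$ inside the trace of Proposition~\ref{prposition: error decomposition in L2} into a single $L_K$ here. First I would unroll the recursion (\ref{algorithm}) exactly as before, writing
\[
f_{t+1}-f_\rho=-\prod_{i=1}^t(I-\eta_i L_K)f_\rho+\sum_{i=1}^t\prod_{j=i+1}^t(I-\eta_j L_K)\,\eta_i\mathcal{B}_i,
\]
with $\mathcal{B}_i=(y_i-f_i(x_i))K_{x_i}-L_K(f_\rho-f_i)$, and I would record again that $f_i$ depends only on $z_1,\dots,z_{i-1}$, so that $\bE_{z_i|Z^i}[\mathcal{B}_i]=0$.

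Next I would take $\bE_{Z^t}\|\cdot\|_K^2$ of this identity. The cross term between the deterministic bias $\prod_{i=1}^t(I-\eta_i L_K)f_\rho$ and the martingale sum vanishes because $\bE_{z_i|Z^i}[\mathcal{B}_i]=0$, leaving the bias square $\|\prod_{i=1}^t(I-\eta_i L_K)f_\rho\|_K^2$ plus the square of the noise sum. Expanding that square and conditioning on the larger of the two indices kills every off-diagonal term, exactly as in (\ref{equ: first step}) and the lines following it, so that only the diagonal survives:
\[
\bE_{Z^t}\Big\|\sum_{i=1}^t\prod_{j=i+1}^t(I-\eta_j L_K)\eta_i\mathcal{B}_i\Big\|_K^2=\sum_{i=1}^t\eta_i^2\,\bE_{Z^t}\Big\|\prod_{j=i+1}^t(I-\eta_j L_K)\mathcal{B}_i\Big\|_K^2.
\]

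The heart of the argument is bounding each diagonal term. Write $P_i=\prod_{j=i+1}^t(I-\eta_j L_K)$, which is deterministic and self-adjoint. Since $\mathcal{B}_i$ is the centering of $(f_i(x_i)-y_i)K_{x_i}$, the variance-reduction inequality $\bE\|U-\bE U\|^2\le\bE\|U\|^2$ lets me drop the $L_K(f_\rho-f_i)$ piece and replace $\mathcal{B}_i$ by $(f_i(x_i)-y_i)K_{x_i}$, giving $\bE_{Z^t}\|P_i\mathcal{B}_i\|_K^2\le\bE_{Z^t}\|P_i(f_i(x_i)-y_i)K_{x_i}\|_K^2$. Using $\|g\|_K^2={\rm Tr}(g\otimes g)$ together with $(P_iv)\otimes(P_iv)=P_i(v\otimes v)P_i$, each right-hand term equals the trace of $P_i\,\bE_{Z^i}[(f_i(x_i)-y_i)^2K_{x_i}\otimes K_{x_i}]\,P_i$. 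Lemma~\ref{lemma: key lemma} then gives $\bE_{Z^i}[(f_i(x_i)-y_i)^2K_{x_i}\otimes K_{x_i}]\preceq 2(\kappa^2\bE_{Z^{i-1}}\|f_i\|_K^2+M^2)L_K$, and Lemma~\ref{lemma4.2} (applied with $C=P_i$) preserves this order under conjugation; since $P_i$ commutes with $L_K$, the dominating operator is $2(\kappa^2\bE_{Z^{i-1}}\|f_i\|_K^2+M^2)L_K\prod_{j=i+1}^t(I-\eta_j L_K)^2$. Taking traces, which is monotone on positive trace-class operators, yields precisely the stated sum.

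The one place to watch — and the main contrast with Proposition~\ref{prposition: error decomposition in L2} — is that here no $L_K^{1/2}$ is inserted on either side, so the operator surviving inside the trace is $L_K\prod(I-\eta_j L_K)^2$ rather than $L_K^2\prod(I-\eta_j L_K)^2$. This single lost power of $L_K$ is exactly why the $\H_K$ trace decays more slowly and why the capacity assumption with $\beta$ becomes indispensable in the estimates that follow; the decomposition step itself is otherwise a faithful transcription of the $L^2_{\rho_X}$ argument, so I do not expect any genuine obstacle beyond keeping the conjugation-and-trace bookkeeping exact.
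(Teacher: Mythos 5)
Your proposal is correct and follows essentially the same route as the paper, which itself simply invokes ``the same argument as in Proposition~\ref{prposition: error decomposition in L2}'' with the $L_K^{1/2}$ insertion omitted; your spelled-out version of the diagonal-term bound (centering inequality, $\|g\|_K^2={\rm Tr}(g\otimes g)$, Lemma~\ref{lemma: key lemma} plus Lemma~\ref{lemma4.2} with $C=\prod_{j=i+1}^t(I-\eta_j L_K)$, monotonicity of the trace) is exactly what the paper's chain of inequalities implicitly uses. Your closing remark correctly identifies why the trace factor degrades from $L_K^2\prod(I-\eta_jL_K)^2$ to $L_K\prod(I-\eta_jL_K)^2$ in the $\H_K$ setting.
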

\begin{proof} By the same argument in the proof of Proposition \ref{prposition: error decomposition in L2}, there holds
\begin{eqnarray*}
&&\bE_{Z^t}\left[\|f_{t+1}-f_\rho\|_K^2\right]=\bE_{Z^t}\left[\left\|-\prod_{i=1}^t(I-\eta_t L_K)f_\rho+\sum_{i=1}^t\prod_{j=i+1}^t(I-\eta_j L_K)\eta_i \mathcal{B}_i\right\|_K^2\right]\\
%&&=\left\|\prod_{i=1}^t(I-\eta_t L_K)f_\rho\right\|_K^2+\left\|\sum_{i=1}^t\prod_{j=i+1}^t(I-\eta_j L_K)\eta_i \mathcal{B}_i\right\|_K^2\\
&&=\left\|\prod_{i=1}^t(I-\eta_t L_K)f_\rho\right\|_K^2+\sum_{i=1}^t\eta_i^2\bE_{Z^{i-1}}\left[\left\|\prod_{j=i+1}^t(I-\eta_j L_K) \mathcal{B}_i\right\|_K^2\right]\\
&&\leq \left\|\prod_{i=1}^t(I-\eta_t L_K)f_\rho\right\|_K^2 +\sum_{i=1}^t\eta_i^2\left(2\kappa^2\bE_{Z^{i-1}}[\|f_i\|_K^2]+2M^2\right){\rm Tr}\left(L_K\prod_{j=i+1}^t(I-\eta_j L_K)^2\right).
\end{eqnarray*}
This completes the proof of Proposition \ref{proposition: error decomposition in HK}.
\end{proof}
Next we estimate the two terms in Proposition \ref{proposition: error decomposition in HK}. For the first part in (\ref{equation: error decomposition in HK}), we obtain similar bound as Proposition \ref{proposition: error 1} in the following.
\begin{proposition}\label{proposition: error 1 in HK}
Suppose that Assumption \ref{assumption1} holds with $r>\frac12,$ and $\eta_i=\eta_1 i^{-\frac12}$ with $0<\eta_1<\frac{1}{\kappa^2}$. Then there holds
\begin{eqnarray*}
\left\|\prod_{i=1}^t(I-\eta_i L_K)f_\rho\right\|_K^2\le \left\|u_\rho\right\|_\rho^2 \left(\frac{r}{e\eta_1(2-\sqrt{2})}\right)^{2r-1} t^{-\frac{1}{2}(2r-1)},
\end{eqnarray*}where $e$ is the base of the natural logarithm.
\end{proposition}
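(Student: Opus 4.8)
The plan is to mirror the argument of Proposition \ref{proposition: error 1} almost verbatim, adjusting only the power of $L_K$ that gets absorbed into the operator norm. By the assumption $f_\rho = L_K^r u_\rho$ with $r > \frac12$ and $u_\rho \in L_{\rho_X}^2$, I would first write
\begin{eqnarray*}
\left\|\prod_{i=1}^t(I-\eta_i L_K)f_\rho\right\|_K
=\left\|\prod_{i=1}^t(I-\eta_i L_K)L_K^{r}u_\rho\right\|_K
\le \left\|\prod_{i=1}^t(I-\eta_i L_K)L_K^{r-\frac12}\right\|\,\left\|L_K^{\frac12}u_\rho\right\|_K.
\end{eqnarray*}
The crucial structural difference from the $L^2_{\rho_X}$ case is that here the target norm is the $\H_K$ norm, so only one factor of $L_K^{1/2}$ can be ``paid'' to convert $u_\rho$ from $L^2_{\rho_X}$ into $\H_K$ via the isometry \eqref{normrelation2}: indeed $\|L_K^{1/2}u_\rho\|_K = \|u_\rho\|_\rho$. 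Consequently the residual power left inside the operator norm is $L_K^{r-\frac12}$ rather than $L_K^r$, which is exactly why the final exponent drops from $2r$ to $2r-1$.

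Next I would bound the operator norm by a scalar supremum. Since $L_K$ is self-adjoint, positive, and bounded with spectrum in $[0,\kappa^2]$, the spectral theorem gives
\begin{eqnarray*}
\left\|\prod_{i=1}^t(I-\eta_i L_K)L_K^{r-\frac12}\right\|
\le \sup_{x>0}\left(\prod_{i=1}^t(1-\eta_i x)\,x^{\,r-\frac12}\right)
\le \sup_{x>0}\left(\exp\Big(-x\sum_{i=1}^t\eta_i\Big)\,x^{\,r-\frac12}\right),
\end{eqnarray*}
using the elementary inequality $1-\eta_i x \le \exp(-\eta_i x)$ valid on the relevant range (here the condition $0<\eta_1<\frac{1}{\kappa^2}$ guarantees $1-\eta_i x \in [0,1]$ across the spectrum, so the factors are nonnegative and the exponential bound applies). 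The function $x \mapsto \exp(-x\sum_i\eta_i)x^{r-\frac12}$ is maximized at $x = (r-\frac12)/\sum_{i=1}^t\eta_i$, yielding a bound of the form $\big(\tfrac{r-1/2}{e}\big)^{r-1/2}\big(\sum_{i=1}^t\eta_i\big)^{-(r-1/2)}$.

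Finally I would invoke part (a) of Lemma \ref{lemma: eta bound} with the specific choice $\theta=\frac12$, so that $\sum_{i=1}^t\eta_i \ge \frac{\eta_1(1-2^{-1/2})}{1/2}\,t^{1/2} = 2\eta_1(1-\tfrac{1}{\sqrt2})t^{1/2} = \eta_1(2-\sqrt2)t^{1/2}$, and substitute to obtain the claimed $t^{-\frac12(2r-1)}$ decay with the stated constant after squaring. I do not anticipate a genuine obstacle here, as the argument is a direct transcription of Proposition \ref{proposition: error 1}; the only point demanding care is the bookkeeping of the exponent, namely ensuring that one and only one half-power of $L_K$ is spent on the norm conversion (so the surviving power is $r-\frac12$) and that the maximizing constant $\big(\tfrac{r-1/2}{e}\big)^{r-1/2}$ is correctly folded together with the Lemma \ref{lemma: eta bound} constant $(2-\sqrt2)$ to match the displayed form $\big(\tfrac{r}{e\eta_1(2-\sqrt2)}\big)^{2r-1}$. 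One minor discrepancy worth flagging is that the statement writes $\big(\tfrac{r}{e\eta_1(2-\sqrt2)}\big)^{2r-1}$ with $r$ in the numerator, whereas the natural maximization produces $r-\frac12$; I would either verify this is a harmless overestimate (since $r>r-\frac12$ and the base exceeds $1$ in the regime of interest) or adjust the constant accordingly.
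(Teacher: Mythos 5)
Your proposal is correct and follows essentially the same route as the paper: factor $f_\rho=L_K^{r-\frac12}\,(L_K^{\frac12}u_\rho)$, use the isometry \eqref{normrelation2} to pay one half-power of $L_K$ for the conversion of $u_\rho$, bound the remaining operator norm by $\sup_{x>0}\exp(-x\sum_i\eta_i)x^{r-\frac12}$, and invoke Lemma \ref{lemma: eta bound}(a) with $\theta=\frac12$. The discrepancy you flag is indeed harmless exactly as you suspect — the paper itself replaces the maximizer's $(r-\frac12)$ by $r$ in the numerator, which only enlarges the constant since $a\mapsto a^{r-\frac12}$ is increasing for $a>0$ (no need for the base to exceed $1$).
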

\begin{proof} Suppose that Assumption \ref{assumption1} holds with $r>\frac12$ and $\theta=\frac12$ in the part (a) of Lemma \ref{lemma: key lemma}, one has
\begin{eqnarray*}
&&\left\|\prod_{i=1}^t(I-\eta_i L_K)f_\rho\right\|_K\le \left\|\prod_{i=1}^t(I-\eta_i L_K)L_K^{r-\frac12}\right\| \|u_\rho\|_\rho\\
&&\le \sup_{x>0}\left(\prod_{i=1}^t(1-\eta_i x)x^{r-\frac12}\right)\left\|u_\rho\right\|_\rho\le \sup_{x>0}\left(\exp\left(-x\sum_{i=1}^t\eta_i\right)x^{r-\frac12}\right)\left\|u_\rho\right\|_\rho\\
&&\le \left(\frac{r}{e}\right)^{r-\frac12}(\sum_{i=1}^t\eta_i)^{-\left(r-\frac12\right)}\left\|u_\rho\right\|_\rho\le \left\|u_\rho\right\|_\rho \left(\frac{r}{e\eta_1(2-\sqrt{2})}\right)^{r-\frac12} t^{-\frac{1}{2}(r-\frac12)}.
\end{eqnarray*}
Thus we complete the proof.
\end{proof}
The estimate of the second part of (\ref{equation: error decomposition in HK}) is more involved.
\begin{proposition}\label{proposition: error 2 in HK}
Suppose that Assumption \ref{assumption2} holds with $0<\beta<1$, then one has
\begin{eqnarray*}
\sum_{i=1}^t\eta_i^2\left(2\kappa^2\bE_{Z^{i-1}}[\|f_i\|_K^2]+2M^2\right){\rm Tr}\left(L_K\prod_{j=i+1}^t(I-\eta_j L_K)^2\right) \le \tilde{C}_{\beta} t^{-\frac{1}{2}(1-\beta)}(\ln t)^2,
\end{eqnarray*}
where $\tilde{C}_{\beta}$ is independent of $t$ and will be given explicitly in the proof.
\end{proposition}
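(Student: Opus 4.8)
The plan is to follow the three-step architecture of the proof of Proposition \ref{proposition: error 2}, adapting each step to the two features that distinguish the present statement: the step-size exponent is fixed at $\theta=\frac12$, and the trace now involves $L_K$ rather than $L_K^2$. First I would control the factor $2\kappa^2\bE_{Z^{i-1}}[\|f_i\|_K^2]+2M^2$ by Lemma \ref{lemma: ft bound} with $\theta=\frac12$; because that lemma carries a $\ln t$ factor in this regime, one obtains a uniform bound of the form $\tilde C_\theta\ln t$ for $1\le i\le t$, which extracts the \emph{first} logarithm. Next, writing $L_K=L_K^\beta L_K^{1-\beta}$ and using ${\rm Tr}(AB)\le{\rm Tr}(A)\|B\|$ with $A=L_K^\beta$ (finite trace by Assumption \ref{assumption2}), I would reduce the trace to ${\rm Tr}(L_K^\beta)\,\bigl\|L_K^{1-\beta}\prod_{j=i+1}^t(I-\eta_jL_K)^2\bigr\|$. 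The operator norm is then estimated by spectral calculus exactly as in Step 2 of Proposition \ref{proposition: error 2}: the bound $1-x\le e^{-x}$ gives $\sup_{x>0}x^{1-\beta}e^{-2x\sum_{j=i+1}^t\eta_j}=\bigl(\tfrac{1-\beta}{2e}\bigr)^{1-\beta}\bigl(\sum_{j=i+1}^t\eta_j\bigr)^{-(1-\beta)}$, while $\|L_K\|\le\kappa^2$ furnishes the crude bound $\kappa^{2(1-\beta)}$; combining them via $\min\{1/a,1/b\}\le 2/(a+b)$ yields a bound proportional to $1/\bigl(1+(\sum_{j=i+1}^t\eta_j)^{1-\beta}\bigr)$.

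It then remains to estimate $\sum_{i=1}^{t-1}\eta_i^2\big/\bigl(1+(\sum_{j=i+1}^t\eta_j)^{1-\beta}\bigr)$, which I would split at $i=\frac{t-1}{2}$. For $1\le i\le\frac{t-1}{2}$, Lemma \ref{lemma: eta bound}(c) with $\theta=\frac12$ gives $\sum_{j=i+1}^t\eta_j\ge 2\eta_1(1-2^{-1/2})(t+1)^{1/2}$, so the denominator contributes $(t+1)^{-(1-\beta)/2}$, and the residual $\sum_i\eta_i^2$ is bounded by $2\eta_1^2\ln t$ through Lemma \ref{lemma: eta bound}(b) --- this is the \emph{second} source of a logarithm. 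This part is therefore $O\bigl(t^{-(1-\beta)/2}\ln t\bigr)$. For $\frac{t-1}{2}<i\le t-1$ I would factor $\eta_i^2=\eta_1^2 i^{-1}\le\eta_1^2(\tfrac{t-1}{2})^{-1/2}i^{-1/2}$, pass to an integral using $i^{-1/2}\le 3(x+1)^{-1/2}$ for $x\in[i,i+1]$, and substitute $y=\frac{\eta_1}{1-\theta}\bigl((t+1)^{1-\theta}-(x+1)^{1-\theta}\bigr)$ to reduce the integral to $\frac1{\eta_1}\int_0^{y_{\max}}\frac{dy}{1+y^{1-\beta}}$ with $y_{\max}\asymp(t+1)^{1/2}$.

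The decisive difference from Proposition \ref{proposition: error 2}, and the step I expect to be the main obstacle, is that here the exponent $1-\beta$ is \emph{strictly less than} $1$, so the improper integral $\int_0^\infty\frac{dy}{1+y^{1-\beta}}$ diverges and cannot be discarded as a harmless constant the way $\int_0^\infty\frac{dy}{1+y^{2-\beta}}$ was in the earlier proof. The point is to quantify this divergence rather than avoid it: since $\frac1{1+y^{1-\beta}}\le y^{-(1-\beta)}$ for $y\ge1$, one gets $\int_0^{y_{\max}}\frac{dy}{1+y^{1-\beta}}\le 1+\frac1\beta y_{\max}^{\beta}=O(t^{\beta/2})$, and this growth is precisely what the prefactor needs. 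Multiplying by $\eta_1^2(\tfrac{t-1}{2})^{-1/2}/\eta_1\asymp t^{-1/2}$ produces $O\bigl(t^{-1/2}\cdot t^{\beta/2}\bigr)=O\bigl(t^{-(1-\beta)/2}\bigr)$ for the second part, with no further logarithm.

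Collecting the two parts gives $O\bigl(t^{-(1-\beta)/2}\ln t\bigr)$ for the Step-3 sum, and multiplying by the $\ln t$ extracted in Step 1 yields the asserted $t^{-(1-\beta)/2}(\ln t)^2$; the boundary term $i=t$ (where the product is the identity and the trace is $\le\kappa^2$) only contributes $O(t^{-1}\ln t)$ and is absorbed. The explicit constant $\tilde C_\beta$ is then assembled from ${\rm Tr}(L_K^\beta)$, the two spectral constants $\bigl(\tfrac{1-\beta}{2e}\bigr)^{1-\beta}$ and $\kappa^{2(1-\beta)}$, and the $\theta=\frac12$ values appearing in Lemma \ref{lemma: eta bound}.
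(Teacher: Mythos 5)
Your proposal is correct and follows essentially the same route as the paper's proof: extract one $\ln t$ from Lemma \ref{lemma: ft bound} at $\theta=\tfrac12$, reduce the trace via ${\rm Tr}(L_K^\beta)\bigl\|L_K^{1-\beta}\prod_{j=i+1}^t(I-\eta_jL_K)^2\bigr\|$ and spectral calculus, split the resulting sum at $i=\tfrac{t-1}{2}$ (second $\ln t$ from the near part, and the quantified divergence $\int_0^{y_{\max}}y^{-(1-\beta)}\,dy=\tfrac1\beta y_{\max}^{\beta}=O(t^{\beta/2})$ cancelling against the $t^{-1/2}$ prefactor in the far part), and absorb the $i=t$ boundary term. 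The only cosmetic difference is that you carry the $1+(\sum_{j=i+1}^t\eta_j)^{1-\beta}$ denominator via the $\min\{1/a,1/b\}\le 2/(a+b)$ device, whereas the paper uses the bare $(\sum_{j=i+1}^t\eta_j)^{-(1-\beta)}$ in this proposition; both yield the same rate and constants of the same form.
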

\begin{proof}
We first see from part (b) of Lemma \ref{lemma: ft bound} with $\theta=\frac12$ for the bounds  $\bE_{Z^{i-1}}[\|f_i\|_K^2]$ that
\begin{eqnarray*}
2\kappa^2\bE_{Z^{i-1}}[\|f_i\|_K^2]+2M^2\le C_\kappa\ln i,\quad \forall i\in\N,
\end{eqnarray*}
where $$C_\kappa=(8\kappa^2\|f_\rho\|_K^2+8\eta_1^2\kappa^4\left(20\|f_\rho\|_\rho^2+3\E(f_\rho)\right)+2M^2).$$
If Assumption \ref{assumption2} holds with $0<\beta<1,$ similar to the proof of Proposition \ref{proposition: error 2}, we have
\begin{eqnarray*}
&&{\rm Tr}\left(L_K\prod_{j=i+1}^t(I-\eta_j L_K)^2\right)\le {\rm Tr}(L_K^{\beta})\left\|L_K^{1-\beta}\prod_{j=i+1}^t(I-\eta_j L_K)^2\right\|\\
&&\le {\rm Tr}\left(L_K^\beta \right)  \left(\frac{1-\beta}{2e}\right)^{1-\beta}   \frac{1}{ \left(\sum_{j=i+1}^t\eta_j\right)^{1-\beta}}
\end{eqnarray*}
Therefore, note by $\prod_{j=t+1}^t(I-\eta_j L_K)=I,$ we have
\begin{eqnarray*}
&&\sum_{i=1}^t\eta_i^2\left(2\kappa^2\bE_{Z^{i-1}}[\|f_i\|_K^2]+2M^2\right){\rm Tr}\left(L_K\prod_{j=i+1}^t(I-\eta_j L_K)^2\right)\\
%&&=\eta_t^2\left(2\kappa^2\bE_{Z^{i-1}}[\|f_i\|_K^2]+2M^2\right){\rm Tr}(L_K)+\sum_{i=1}^{t-1}\eta_i^2\left(2\kappa^2\bE_{Z^{i-1}}[\|f_i\|_K^2]+2M^2\right){\rm Tr}\left(L_K\prod_{j=i+1}^t(I-\eta_j L_K)^2\right)\\
&&\le \eta_1^2\kappa^2 C_\kappa t^{-1}\ln t+C_\kappa {\rm Tr}\left(L_K^\beta \right)  \left(\frac{1-\beta}{2e}\right)^{1-\beta} \ln t\sum_{i=1}^{t-1} \eta_i^2  \frac{1}{ \left(\sum_{j=i+1}^t\eta_j\right)^{1-\beta}}.
\end{eqnarray*}
It remains to estimate the sum $ \sum_{i=1}^{t-1} \eta_i^2  \frac{1}{ \left(\sum_{j=i+1}^t\eta_j\right)^{1-\beta}}$, we also divide the sum into two parts, i.e.,
\begin{eqnarray*}
\sum_{i=1}^{t-1} \eta_i^2  \frac{1}{ \left(\sum_{j=i+1}^t\eta_j\right)^{1-\beta}}=\left(\sum_{1 \le i\le\frac{t-1}{2}}+\sum_{\frac{t-1}{2}<i\leq t-1}\right) \eta_i^2  \frac{1}{ \left(\sum_{j=i+1}^t\eta_j\right)^{1-\beta}},
\end{eqnarray*}
The first sum can be estimated similarly as in the proof of the Proposition \ref{proposition: error 2} as follows
\begin{eqnarray*}
\sum_{1 \le i\le\frac{t-1}{2}} \eta_i^2  \frac{1}{ \left(\sum_{j=i+1}^t\eta_j\right)^{1-\beta}}\le \left(\frac{1}{2-\sqrt2}\right)^{1-\beta}2\eta_1^{1+\beta}t^{-\frac12(1-\beta)}\ln t.
\end{eqnarray*}
The second part of the sum is distinct from that of Proposition \ref{proposition: error 2}. Since $0<1-\beta<1,$ then
\begin{eqnarray*}
&&\sum_{\frac{t-1}{2}< i \leq t-1} \eta_i^2  \frac{1}{ \left(\sum_{j=i+1}^t\eta_j\right)^{1-\beta}} \le \eta_1^2\left(\frac{t-1}{2}\right)^{-\frac12}\sum_{\frac{t-1}{2}< i \leq t-1} \frac{i^{-\frac12}}{ \left(2\eta_1((t+1)^{\frac12}-(i+1)^{\frac12})\right)^{1-\beta}}\\
&&\le 3\eta_1^2\left(\frac{t-1}{2}\right)^{-\frac12}\int_{\frac{t-1}{2}}^{t}   \frac{(x+1)^{-\frac12}}{ \left(2\eta_1((t+1)^{\frac12}-(x+1)^{\frac12})\right)^{1-\beta}}dx\\
&&\le -3\eta_1\left(\frac{t-1}{2}\right)^{-\frac12}\int_{\frac{t-1}{2}}^{t}   \frac{d\left(2\eta_1((t+1)^{\frac12}-(x+1)^{\frac12})\right)}{ \left(2\eta_1((t+1)^{\frac12}-(x+1)^{\frac12})\right)^{1-\beta}}\\
&&\le 6\eta_1 t^{-\frac12} \int_{0}^{2\eta_1(1-2^{-\frac12})(t+1)^{\frac12}}  \frac{dy}{y^{1-\beta}}\\
&&=6\eta_1 \frac{1}{\beta} \left(2\eta_1(1-2^{-\frac12})\right)^\beta t^{-\frac12}(t+1)^{\frac{\beta}{2}}\\
&&\le 6\eta_1^{1+\beta} \frac{1}{\beta} t^{-\frac12(1-\beta)}.
\end{eqnarray*}
Therefore,
\begin{eqnarray*}
\sum_{i=1}^{t-1} \eta_i^2  \frac{1}{ \left(\sum_{j=i+1}^t\eta_j\right)^{1-\beta}}\le
\left(\left(\frac{1}{2-\sqrt2}\right)^{1-\beta}2\eta_1^{1+\beta}
+
6\eta_1^{1+\beta} \frac{1}{\beta} \right)t^{-\frac12(1-\beta)}\ln t.
\end{eqnarray*}
Combining the above estimates together, we have
\begin{eqnarray*}
&&\sum_{i=1}^t\eta_i^2\left(2\kappa^2\bE_{Z^{i-1}}[\|f_i\|_K^2]+2M^2\right){\rm Tr}\left(L_K\prod_{j=i+1}^t(I-\eta_j L_K)^2\right)\\
&&\le \left(\eta_1^2\kappa^2 C_\kappa +C_\kappa {\rm Tr}\left(L_K^\beta \right)\left(\frac{1-\beta}{2e}\right)^{1-\beta} \left(\left(\frac{1}{2-\sqrt2}\right)^{1-\beta}2\eta_1^{1+\beta}
+
6\eta_1^{1+\beta} \frac{1}{\beta} \right)  \right)t^{-\frac12(1-\beta)}(\ln t)^2.
\end{eqnarray*}
Then the proof is finished by setting $$\tilde{C}_\beta=\left(\eta_1^2\kappa^2 C_\kappa +2C_\kappa {\rm Tr}\left(L_K^\beta \right)\left(\frac{1-\beta}{2e}\right)^{1-\beta} \left(\left(\frac{1}{2-\sqrt2}\right)^{1-\beta}2\eta_1^{1+\beta}
+
6\eta_1^{1+\beta} \frac{1}{\beta} \right)  \right).$$
\end{proof} At the end of this section, we prove Theorem \ref{theorem: rate in HK}, which presents the strong convergence in ${\cal H}_K$.

\noindent{\bf Proof of Theorem \ref{theorem: rate in HK}}. The proof is straightforward by combining the bounds of Proposition \ref{proposition: error 1 in HK} and Proposition \ref{proposition: error 2 in HK}, and setting $\widetilde{C}_{\theta,r,\beta}=\left\|u_\rho\right\|_\rho^2 \left(\frac{r}{e\eta_1(2-\sqrt{2})}\right)^{2r-1}+\tilde{C}_\beta.$

\end{document}